\def\eqref#1{equation~\ref{#1}}
\def\1{\bm{1}}
\def\vc{{\bm{c}}}
\DeclareMathAlphabet{\mathsfit}{\encodingdefault}{\sfdefault}{m}{sl}
\SetMathAlphabet{\mathsfit}{bold}{\encodingdefault}{\sfdefault}{bx}{n}
\DeclareMathOperator*{\argmax}{arg\,max}
\newcounter{example}[section]
\newtheorem{proposition}{Proposition}
\newenvironment{example}[1][]{\refstepcounter{example}\par\medskip
  \noindent \textbf{Example~\theexample. #1} \rmfamily}{\medskip}
\title{Dealing With Non-stationarity in Decentralized Cooperative Multi-Agent Deep Reinforcement Learning via Multi-Timescale Learning}
\author{
Hadi Nekoei\thanks{Correspondence to: \texttt{nekoeihe@mila.quebec.}} \\
Mila, Université de Montréal \\
\And
Akilesh Badrinaaraayanan \\
Mila, Université de Montréal \\
\And
Amit Sinha \\
Mila, University of McGill \\
\AND
Mohammad Amini \\
Mila, University of McGill \\
\And
Janarthanan Rajendran \\
Mila, Université de Montréal\\
\And 
Aditya Mahajan  \\
Mila, University of McGill  \\
\AND 
Sarath Chandar \\
Mila, Polytechnique Montréal \\
}
\begin{document}

\maketitle

\begin{abstract}
Decentralized cooperative multi-agent deep reinforcement learning (MARL) can be a versatile learning framework, particularly in scenarios where centralized training is either not possible or not practical. One of the critical challenges in decentralized deep MARL is the non-stationarity of the learning environment when multiple agents are learning concurrently. A commonly used and efficient scheme for decentralized MARL is independent learning in which agents concurrently update their policies independently of each other.
We first show that independent learning does not always converge, while sequential learning where agents update their policies one after another in a sequence is guaranteed to converge to an agent-by-agent optimal solution. In sequential learning, when one agent updates its policy, all other agent's policies are kept fixed, alleviating the challenge of non-stationarity due to simultaneous updates in other agents' policies. However, it can be slow because only one agent is learning at any time. Therefore it might also not always be practical.
In this work, we propose a decentralized cooperative MARL algorithm based on multi-timescale learning. In multi-timescale learning, all agents learn simultaneously, but at different learning rates. In our proposed method, when one agent updates its policy, other agents are allowed to update their policies as well, but at a slower rate. This speeds up sequential learning, while also minimizing non-stationarity caused by other agents updating concurrently. Multi-timescale learning outperforms state-of-the-art decentralized learning methods on a set of challenging multi-agent cooperative tasks in the \textit{epymarl}~\citep{papoudakis2020benchmarking} benchmark. This can be seen as a first step towards more general decentralized cooperative deep MARL methods based on multi-timescale learning.
\end{abstract}

\section{Introduction}


In many emerging reinforcement learning (RL) applications, multiple agents interact in a shared environment. There are three types of such multi-agent environments:
(i)~Environments where agents are competitive and have an objective of maximizing their individual rewards: examples include games such as Poker~\citep{brown2018superhuman}, online auctions, and firms interacting in a market;
(ii)~Environments where agents are cooperative and have an objective of maximizing a common reward: examples include games such as Hanabi~\citep{bard2020hanabi}, multi-agent robotics~\citep{kober2013reinforcement}, networked control systems~\citep{yuksel2013stochastic}, power-grid systems~\citep{mai2023multi}, and self-driving cars;
(iii)~Environments where agents have mixed strategies and can be both cooperative and competitive: examples include games such as Football~\citep{kurach2019google} and Starcraft~\citep{vinyals2019grandmaster}. In this paper, we focus on cooperative multi-agent environments. 

When the system dynamics and reward function are known, cooperative multi-agent environments are studied using team theory~\citep{marshack1972economic}, cooperative game theory~\citep{Shapley2016}, or decentralized stochastic control~\citep{mahajan2013optimal}. When the system dynamics and reward function are unknown, they are investigated using multi-agent reinforcement learning (MARL).
One of the main challenges in cooperative MARL is the \emph{non-stationarity of the learning environment}. When multiple agents are learning and updating their policies concurrently, the transition dynamics and rewards are not stationary from a single agent's point of view since the next state of the environment is a function of the joint action of all agents and not only that agent's own action. 
The problem of non-stationarity becomes even more severe when the environment is partially observable which results in incomplete and asymmetric information across agents. 

One setting which is commonly used in the literature to circumvent the challenge of non-stationarity is to assume that agents are trained in an environment where a centralized critic can access the observations and actions of all agents (and potentially some or all components of the state). This centralized critic computes a centralized action-value function, which is then used by all agents to determine policies that can be executed in a decentralized manner by agents using just the local information available to them. This learning paradigm is called \emph{centralized training and decentralized execution} (CTDE). 
Although CTDE is able to circumvent the conceptual challenges of non-stationarity of the environment and partial observability, it is not an ideal solution in all scenarios. CTDE is only applicable when there is a centralized critic which has access to the observations and actions of all the agents. It is not always possible to construct such a centralized critic, especially in online real-world settings. For example, self-driving cars cannot share their policies and observations with other cars on the road in real time.

\begin{wrapfigure}{r}{.4\textwidth}
    \begin{minipage}{\linewidth}
    \centering
    \includegraphics[width=\linewidth]{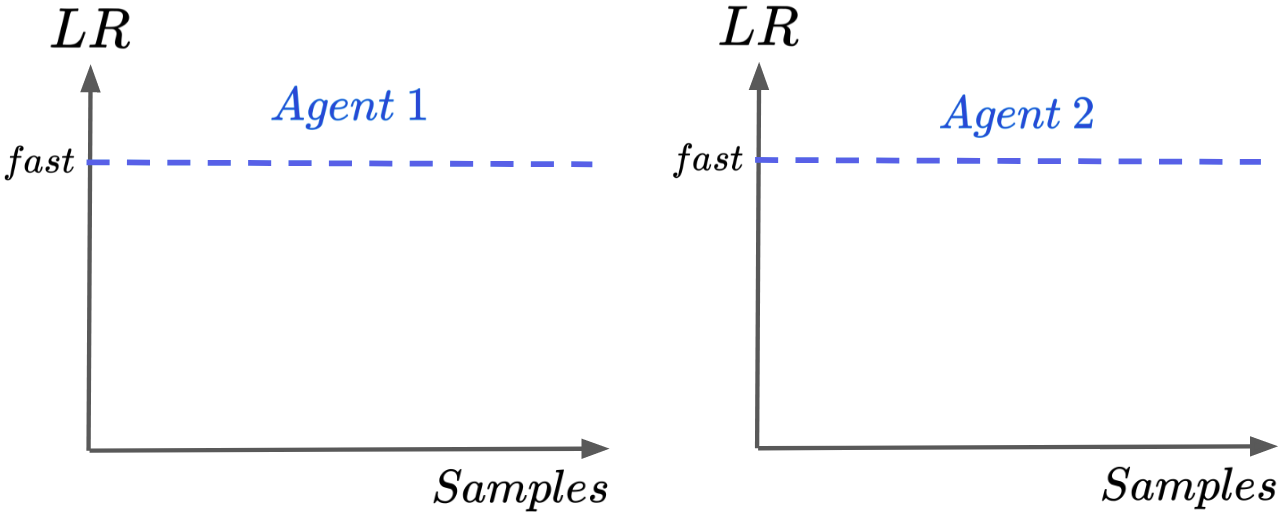}
    \includegraphics[width=\linewidth]{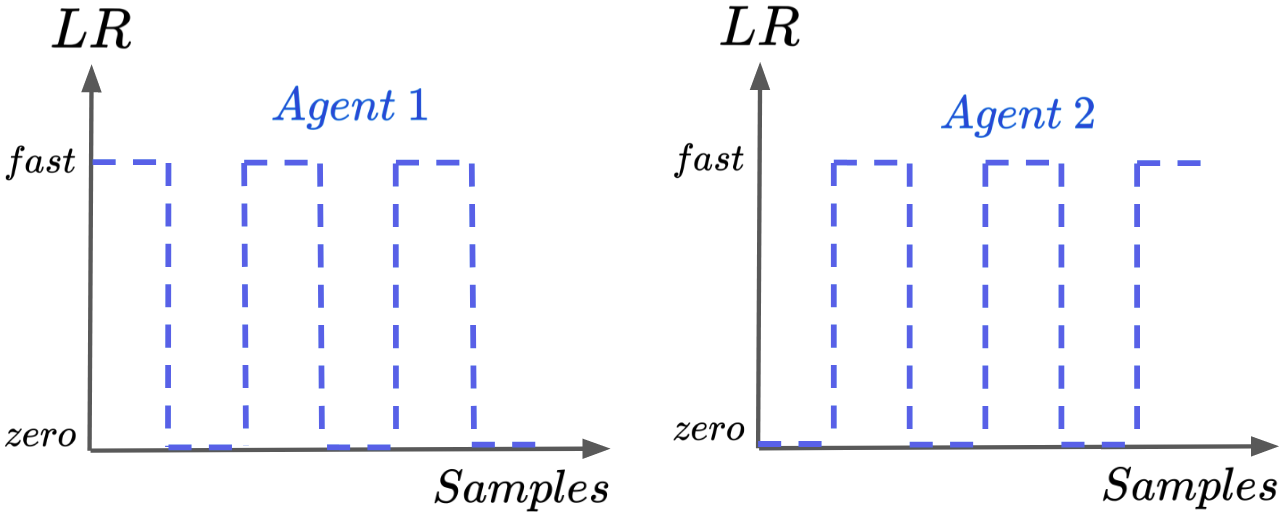}
    \label{fig:independent_sequential_staged_b}
    \includegraphics[width=\linewidth]{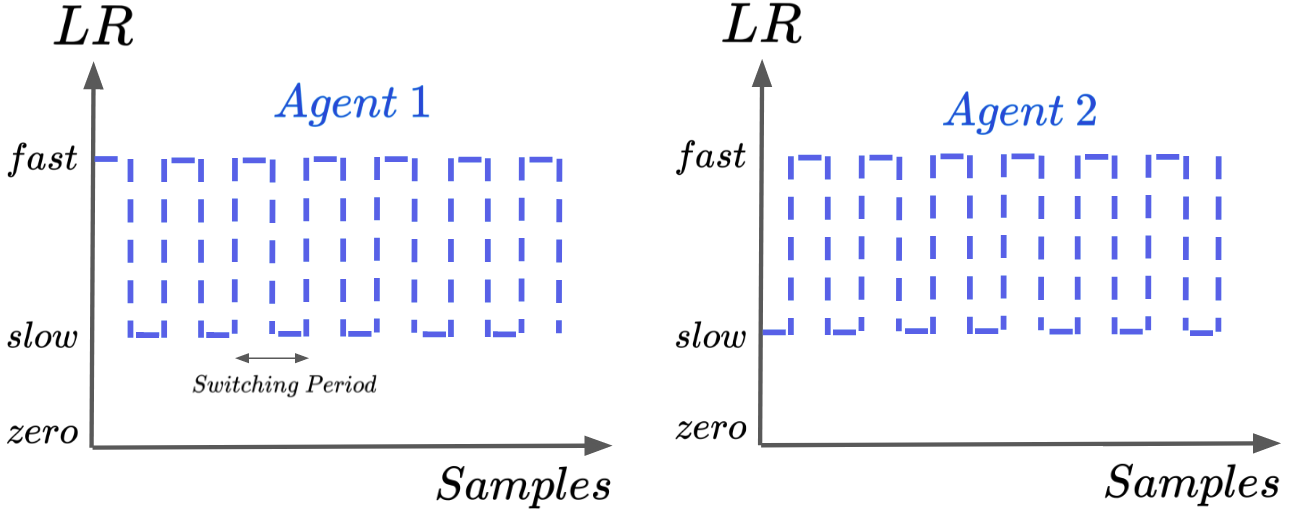}
    \label{fig:independent_sequential_staged_c}
\end{minipage}
\caption{\em Difference in learning rate schedules between (top) independent learning, (middle) sequential learning, and (bottom) an instance of multi-timescale learning.}\label{fig:independent_sequential_staged}
\vskip -2\baselineskip
\end{wrapfigure}


An alternative that does not suffer from the limitations of CTDE is decentralized training, which is the focus of this work. In decentralized training, each agent has access to only its local observations and actions. Here, the challenge of non-stationarity becomes more pronounced. A commonly used scheme for decentralized cooperative deep MARL is to approximate what we call~\emph{independent iterative best response} (IIBR) (a form of independent learning) where agents independently and concurrently try to find the best response strategy with respect to other agents' policies. Examples of applying independent learning to MARL include independent PPO (IPPO)~\citep{de2020independent} and independent Q-learning (IQL)~\citep{tampuu2017multiagent}.

An alternate scheme one could use for decentralized cooperative deep MARL is to approximate what we call \emph{sequential iterative best response} (SIBR) (a form of sequential learning), where instead of all agents learning simultaneously, they learn sequentially one after another. 
The idea of SIBR goes back to fictitious play~\citep{brown1951iterative} and has been used by the game theory community widely, but mostly overlooked by the deep MARL community.
In SIBR, an agent updates its policy until convergence to the Best Response (BR) strategy for other agents' fixed policies. The updated agent's policy is then fixed and the next agent updates its policy to the BR strategy of the other agents' fixed policies and the cycle continues.
This way, only one agent learns at any time, while all other agent's policies remain fixed, circumventing the non-stationarity caused by other agents' policies changing while learning. 
We showcase an example in section \ref{SBR_PBR} where IIBR does not converge while SIBR converges and prove that this is a general property of SIBR that is guaranteed to converge to an agent-by-agent optimal solution (also called team-Nash equilibrium).

Although sequential learning can completely side-step the challenge of non-stationarity introduced by other agents learning concurrently, it slows down the learning process because only one agent is learning at any time. To address this issue, we introduce \emph{multi-timescale learning} framework where all agents learn simultaneously, but at different learning rates. In our proposed method, instead of keeping the ``non-learning'' agents stationary, we allow them to learn using a slower learning rate.
The difference between independent learning, sequential learning, and multi-timescale learning is illustrated in Figure \ref{fig:independent_sequential_staged}.

Our hypothesis is that using multi-timescale learning in the way described above can help the currently used independent learning algorithms to deal with non-stationarity better and thereby improve the performance on cooperative decentralized deep MARL tasks.
In this work we propose Multi-timescale PPO (MTPPO) and Multi-timescale Q-learning (MTQL) as multi-timescale versions of the two commonly used decentralized deep MARL algorithms: Independent PPO (IPPO)~\citep{de2020independent} and Independent Q-learning (IQL)~\citep{tampuu2017multiagent}. We evaluate MTPPO and MTQL on $12$ complex cooperative MARL environments from the \textit{epymarl}~\citep{papoudakis2020benchmarking} testbed. Our results show that multi-timescale algorithms outperform both independent and sequential deep MARL algorithms in most of the tasks.
We perform a detailed analysis to understand the performance of multi-timescale learning.
Multi-timescale learning is a simple idea that has been usually overlooked by deep MARL practitioners, but it can have a significant improvement in performance and therefore should be one of the tools used in decentralized cooperative deep MARL.


\section{IIBR vs SIBR} \label{SBR_PBR}

In this section, we provide an example where IIBR does not converge while SIBR converges and prove that this is a general property of SIBR that is guaranteed to converge to an agent-by-agent optimal solution.

Consider an $n$-agent MARL problem. Let $\theta^i$ denote the policy parameters of agent $i$ and let $\theta=(\theta^1, \cdots, \theta^{n})$ denote the policy parameters of the $n$-agents team. Given policy parameters $\theta$, we use $J(\theta)$ to denote the performance of the team. Furthermore, the notation $\theta^{-i}$ refers to the policy parameters of all agents other than that of agent $i$'s.
We consider iterative methods of the form: $\theta_{t+1} = F_t(\theta_t)$ to update policy parameters, where $\theta_t$ is the policy parameters at iteration $t$ and $F_t$ is some generic update function. Best Response (BR) dynamics is a popular class of update scheme and the most common form of iterative BR dynamics are IIBR and SIBR.

\noindent\textbf{IIBR} is an iterative policy update scheme where at iteration $t$, for all $ i \in \{1,\cdots,n\}$, agent $i$ chooses its policy parameters to be the BR to $\theta^{-i}_{t}$. $$\theta_{t+1}^i=\argmax_{\theta^i} J(\theta^i, \theta^{-i}_{t}) \;.$$
\textbf{SIBR} is an iterative policy update scheme where at iteration $t$, only agent $j = (t \bmod{n})+1$ updates its policy parameters to be the BR to $\theta^{-j}_{t}$, and all other agents remain frozen. 

\[
    \theta_{t+1}^i=
\begin{cases}
    \argmax_{\theta^i} J(\theta^i, \theta_{t}^{-i}),& \text{if } i = (t \bmod{n})+1,\\
    \theta^i_{t}              & \text{otherwise.}
\end{cases}
\]
Note that IIBR suffers from non-stationarity of the environment because all agents are updating in parallel but SIBR does not suffer from that non-stationarity. We now present an example which shows that the non-stationarity can lead to non-convergence of IIBR while SIBR converges.

\begin{example}
Consider a multi-agent estimation problem for minimizing 
team mean-squared
error introduced by~\citet{afshari2021multi}. There are three agents, indexed by $i \in
\{1,2,3\}$, which observe the state of nature $x \sim \mathcal{N}(0,1)$ with
noise. In particular, the observation $y_i \in \mathbb{R}$ of agent~$i$ is
$y_i = x + v_i$, where $v_i \sim \mathcal{N}(0,0.5)$ and $(x,v_1,v_2,v_3)$ are
independent. 
Agent~$i$ generates an estimate $\hat z_i = \mu_i(y_i) \in \mathbb{R}$ based
on its local observations. The objective in the multi-agent estimation problem is to minimize the team mean-squared estimation error $\mathbb{E}\bigl[
    \bigl(x - \frac{1}{3}\sum_{i=1}^3 \hat z_i\bigr)^2 
  \bigr]$.
Minimizing team mean-squared estimation error requires the agents to cooperate to minimize the distance between the average of their estimations and the true state of nature.

As shown in~\citet{afshari2021multi}, the optimal estimation policy is linear, i.e.,
$\hat z_i = K_i y_i$, where the gains $K_i$ are given by the solution of the
following system of linear equations derived by writing the first-order optimality conditions for the total expected error and setting the derivative to zero (for more details refer to appendix \ref{app:MTMSE}).

Iterative best response corresponds to solving the system $\Gamma K =
\eta$ iteratively as $K^{(t+1)} = M^{-1}(N K^{(t)} + \eta)$ for appropriate choice of $M$ and $N$. 
This may be viewed as a linear system $K^{(t+1)} = A K^{(t)} + B \eta$, which is stable when the eigenvalues of $A$ lie within the unit circle. 

We now compute the $A$-matrix for IIBR and SIBR. 
For ease of notation, we will write $\Gamma =
D + L + U$ where $D$ is the diagonal entries, $L$ is the lower triangular
entries (excluding the diagonal) and $U$ is the upper triangular entries (excluding the diagonal).
In IIBR, all agents update their policy at the
same time. So, for this example, IIBR is same as
the Jacobi method for solving a system of linear equations for which $M = D$ and $N = -(L+U)$. Hence $A_{\textit{IIBR}} = - D^{-1}(L+U)$.
Note that the eigenvalues of $A_{\textit{IIBR}}$ are $\{ -\frac 43, \frac
23, \frac 23\}$. Thus, the spectral radius of $A_{\textit{IIBR}}$ is
$\frac 43 > 1$ which is outside of the unit circle. Hence, IIBR does not converge. 

In SIBR, agents update their policies one by one.
So, for this example, the sequential iterative best response is the same as the Gauss
Seidel method for solving a system of linear equations for which $M = (D + L)$
and $N = -U$. Hence, $A_{\textit{SIBR}} = -(D + L)^{-1} U$.
Note that the eigenvalues of $A_{\textit{SIBR}}$ are $\{0, \frac{1}{27}( 14
\pm \sqrt{20} i) \}$. Thus, the spectral radius of $A_{\textit{SIBR}}$ is
$6\sqrt{6}/27 < 1$. Hence, SIBR converges. 
\end{example}

This example illustrates that SIBR which circumvents the problem of non-stationarity converges, while IIBR does not. We now show that this is a general property of SIBR, i.e., SIBR is guaranteed to converge.

\noindent\begin{proposition}
When the per-step rewards are bounded, then the performance of SIBR converges. Moreover, the policy parameters of the agents converge to an agent-by-agent policy. In particular, let $\theta^i_t$ denote the policy parameters of agent~$i$ at time~$t$. Let $\theta^{i\star}$ be any limit point of $\{\theta^i_t\}$ along the time-steps when agent~$i$ updates its policy parameters. Then $(\theta^{1\star}, \theta^{2\star})$ is agent-by-agent optimal.
\end{proposition}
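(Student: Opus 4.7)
The plan is to establish the result in two steps: first, show that the performance sequence $\{J(\theta_t)\}$ is monotone non-decreasing and hence convergent; second, use a continuity argument at any limit point of the parameters to extract agent-by-agent optimality.

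For the first step, I would observe that at iteration $t$ only the updating agent $j = (t \bmod n) + 1$ changes its parameters, and it does so by solving $\theta_{t+1}^j \in \argmax_{\theta^j} J(\theta^j, \theta_t^{-j})$ while the other agents remain frozen. Since $\theta_t^j$ itself is feasible in this maximization, this immediately gives $J(\theta_{t+1}) \geq J(\theta_t)$, so $\{J(\theta_t)\}$ is monotone non-decreasing. Bounded per-step rewards imply that $J$ is bounded above (e.g., by $R_{\max}/(1-\gamma)$ in the discounted case), and a bounded monotone sequence converges, yielding $J(\theta_t) \to J^\star$ for some $J^\star$. This already settles the first claim.

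For the second step, I would fix an agent $i$ and work along the subsequence of iteration indices $t_k = kn + (i-1)$ at which $i$ updates. By hypothesis, $\theta^i_{t_k+1} \to \theta^{i\star}$ along some sub-subsequence; using compactness of the parameter space (or a boundedness assumption on the iterates together with a diagonal extraction), I would pass to a further subsequence along which all agents' parameters converge jointly to a limit $\theta^\star$. The best-response property of the update gives $J(\theta_{t_k+1}^i, \theta_{t_k}^{-i}) \geq J(\theta^i, \theta_{t_k}^{-i})$ for every candidate $\theta^i$. Invoking continuity of $J$ in $\theta$ (which holds for standard smooth policy parametrizations) and taking $k \to \infty$ yields $J(\theta^{i\star}, \theta^{-i\star}) \geq J(\theta^i, \theta^{-i\star})$ for every $\theta^i$, i.e., $\theta^{i\star}$ is a best response to $\theta^{-i\star}$. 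Repeating the argument for each agent $i$ establishes that $\theta^\star$ is agent-by-agent optimal.

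The main obstacle, and the only place where real assumptions enter, is the limit-point analysis in the second step: continuity of $J$ in $\theta$ together with a compactness/boundedness hypothesis on the iterates is essential both for extracting joint convergent subsequences across all agents and for passing to the limit in the best-response inequality. Non-uniqueness of the argmax in the SIBR update is a minor issue that can be dispatched by a measurable selection argument, or by invoking Berge's maximum theorem to secure upper hemicontinuity of the best-response correspondence. Without such regularity on $J$, the second step fails even though the monotone-convergence part of the first step remains intact.
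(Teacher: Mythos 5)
Your proof follows essentially the same route as the paper's: monotonicity of $J$ under sequential best-response updates plus boundedness of rewards gives convergence of the performance sequence, and a limit-point argument along each agent's update subsequence yields agent-by-agent optimality. You are in fact more explicit than the paper about the continuity of $J$ and the compactness of the iterates needed to pass to the limit in the best-response inequality --- assumptions the paper's proof uses implicitly when it asserts $\max_{\theta^1} J(\theta^1,\theta^{2\star}) = J^\star$ at the limit points.
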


\begin{proof}
For the simplicity of exposition, we consider a $2$ player team game. The same argument applies to a general $n$ player team game.  Let $(\theta^1_{t}, \theta^2_{t})$ denote the parameters of player~$1$ and player~$2$ at iteration~$t$ and let 
 $J(\theta_{t}^1, \theta_{t}^2)$ denote the performance of the team. We assume that players update their policies following SIBR in the order $1 \to 2 \to 1 \to 2 \to \cdots$. At odd iterations 
 $\theta^1_{(2t + 1)} = \argmax_{\theta^1} J(\theta^1, \theta^2_{(2t)}) \; \text{and} \; \theta^2_{(2t+1)} = \theta^2_{(2t)}.
$
Similarly, at even iterations
$\theta^1_{(2t)} = \theta^1_{(2t-1)}
 \; \text{and} \;
   \theta^2_{(2t)} = \argmax_{\theta^2} J(\theta^1_{(2t-1)}, \theta^2).
$

Therefore, we have
  \[
    J\left(\theta_0^1, \theta_0^2\right) \leq J\left(\theta_{1}^1, \theta_{1}^2 = \theta_{0}^2\right) \leq J\left(\theta_{2}^1 = \theta_{1}^1, \theta_{2}^2\right) \leq \cdots. 
\]
For any iteration, we have
\begin{multline*}
    J\left(\theta_{(2t)}^1 = \theta_{(2t-1)}^1, \theta_{(2t)}^2\right) \leq J\left(\theta_{(2t+1)}^1, \theta_{(2t+1)}^2 = \theta_{(2t)}^2\right)
     \leq J\left(\theta_{(2t+2)}^1 = \theta_{(2t+1)}^1, \theta_{(2t+2)}^2\right) \leq \cdots. 
\end{multline*}
$J(\theta^1_{t}, \theta^2_{t})$ is a non-decreasing sequence and is bounded from above (because the rewards are bounded). Hence, it must converge to a limit. Let $J^\star$ denote this limit.
Moreover, let $\theta^{1\star}$ be any limit point of $\{\theta^{1}_{2t+1}\}_{t \ge 1}$ and $\theta^{2\star}$ be any limit point of $\{\theta^2_{2t}\}_{t \ge 1}$. Then, it must be the case that
\[
  \max_{\theta^1} J(\theta^1, \theta^{2\star}) = J^{\star}
  \quad\text{and}\quad
  \max_{\theta^2} J(\theta^{1\star}, \theta^2) = J^{\star}.
\]
Thus, $(\theta^{1\star}, \theta^{2\star})$ is an agent-by-agent optimal solution.
\end{proof}
Note that SIBR is guaranteed to converge only to an agent-by-agent optimal solution, where unilateral deviations by an agent do not improve performance. This is a weaker notion of a solution than global optimality. However, under certain assumptions (such as when $J$ is concave in $(\theta^1, \theta^2)$), agent-by-agent optimality implies global optimality. See~\citet{marshack1972economic, yuksel2013stochastic} for a discussion.

\section{Multi-Timescale Learning for Decentralized Cooperative Deep MARL}\label{mtsl_for_deeprl}

\begin{wrapfigure}{r}{0.35\textwidth}
\vspace*{-8mm}
\includegraphics[width=0.35\columnwidth]{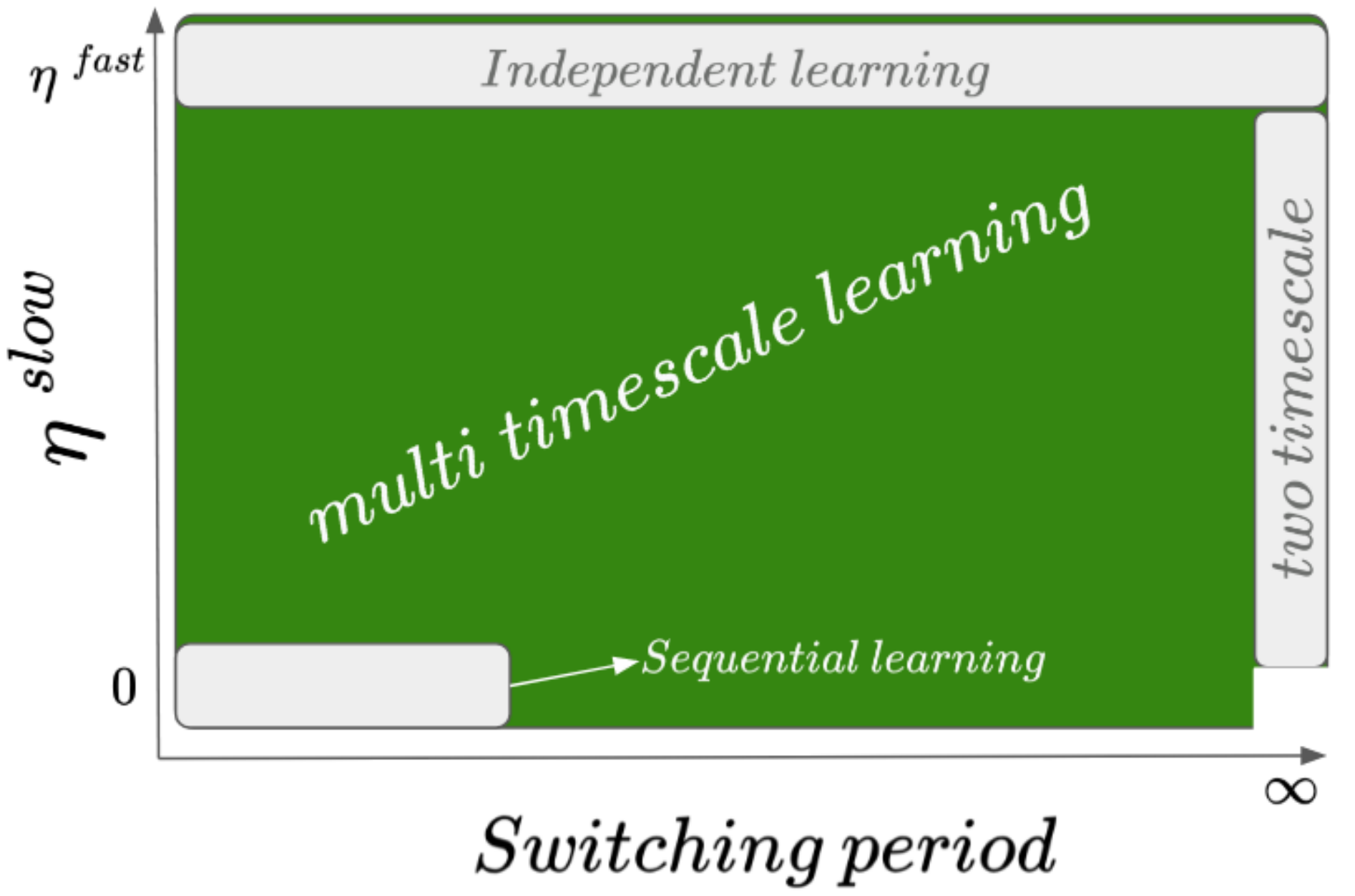}
\caption{\em Multi-timescale learning as a unified framework for independent learning, sequential learning, and two-timescale stochastic approximation. The $x$-axis indicates how fast agents switch between timescales and the $y$-axis indicates the learning rate of the slower agent.}
\label{fig:MTMARL}
 \vspace*{-5mm}
\end{wrapfigure}

In settings of consideration in this paper, where the true model is unknown, and the environment is large and complex that the policy is parameterized using deep neural network function approximator, it is not possible to calculate the exact BR to other agents' policies. 
Agents approximate BR by computing noisy gradients and using gradient ascent iteratively to update their policy parameters. 
In a gradient-based implementation of SIBR, there is only one active agent that performs gradient ascent at any time.
All agents other than the active agent keep their policy parameters frozen which slows down the overall learning process.

Our key insight is that we can speed up overall learning while still minimizing the perceived non-stationarity by allowing the non-active agents to update their policy parameters as well but at a slower timescale. This is an instance of what we call \emph{multi-timescale learning}, where all agents update their policies concurrently, but at different timescales (learning rates). 
Inspiration for using multi-timescale learning comes from two-timescale stochastic approximation methods~\citep{borkar1997stochastic}, which are recursive algorithms in which some of the parameters are updated using smaller step-sizes compared to the remaining parameters~\citep{konda2004convergence}. This principle has been also widely used to train actor-critic algorithms \citep{konda1999actor}. However, we use the idea of multi-timescale algorithms in a different manner, as we explain below.

To illustrate multi-timescale learning, let us consider $n$ agents $\{\theta^i\}_{i=1}^n$ getting trained with $H$ levels of learning rates $\{\eta^h\}_{h=0}^{H-1}$. We can divide the agents to clusters of $\{\vc^h\}_{h=0}^{H-1}$ where $\vc^h$ are the agents trained with learning rate $\eta^h$. 
The switching period ($s$) controls how frequently agents rotate among different clusters (timescales). For example, in the case of 3 agents with $H=2$ and $s=100$, the agents in the clusters $\vc^0$ and $\vc^1$ change as follows: $\vc^0 = \{\theta^0\}$ and $\vc^1 = \{\theta^1, \theta^2\}$ for the first 100 training steps $(t)$, then $\vc^0 = \{\theta^1\}$ and $\vc^1 = \{\theta^0, \theta^2\}$ for $100< t \leq 200$, and $\vc^0 = \{\theta^2\}$ and $\vc^1 = \{\theta^0, \theta^1\}$ for $200< t \leq 300$, and this pattern repeats. All agents in $\vc^0$ will be trained with $\eta^0$, while all the agents in $\vc^1$ will be trained with $\eta^1$.

Even though multi-timescale learning can be implemented with more than two timescales, in this work, we focused on having only two timescales. In any period, one agent $\vc^0 = \{\theta^i\}$ learns at rate $\eta^{\text{fast}}$ and all other agents $\vc^1 = \{\theta^{-i}\}$ learn at rate $\eta^{\text{slow}}$.
If we set $\eta^{\text{slow}} = \eta^{\text{fast}}$, then multi-timescale learning reduces to independent learning. Similarly, if we set $s = \infty$, multi-timescale learning reduces to a standard two-timescale stochastic approximation, where agents are learning at different learning rates, but the learning rates do not switch over time. Furthermore, if we set $\eta^{\text{slow}} = 0$ 
multi-timescale learning reduces to sequential learning, where only one agent at a time updates its gradient. Thus, independent learning, sequential learning, and two-timescale stochastic approximation can all be considered special cases of multi-timescale learning. This is illustrated in Figure~\ref{fig:MTMARL}.

\begin{wrapfigure}{r}{0.5\textwidth}
\vspace*{-10mm}
\begin{minipage}{\linewidth}
\begin{algorithm}[H]
\begin{flushleft}
\textbf{Input} learning rate schedules including $\{\eta^{\text{fast}}, \eta^{\text{slow}}\}$ and switching period $s$ \\
\textbf{Initialize} actors $\pi^i(\theta)$ and critics $Q^i(\phi)$, $i=\{1, \dots, n\}$\\
\textbf{Initialize} faster agent $i^* = 1$ \\
\For{t=1 to \textit{max-train-steps}}{
    \If{ $t \pmod{s} == 0$}{
    set faster agent $i^* = (i^* + 1 \mod{n} ) + 1$
    }
    $\pi^{i^*}(\theta), Q^{i^*}(\phi) \leftarrow \text{PPO update step with} \; \eta^{\text{fast}}$\\
    $\pi^i(\theta), Q^i(\phi) \leftarrow \text{PPO update step with} \; \eta^{\text{slow}}$, $\forall i \neq i^*$
}
\caption{Multi-timescale PPO}
\label{pseudocode:crossval}
\end{flushleft}
\end{algorithm}
\end{minipage}
\vspace*{-10mm}
\end{wrapfigure}

In this paper, we propose multi-timescale versions of two commonly used decentralized cooperative deep MARL algorithms: IPPO and IQL. They are the methods with the best performance among all decentralized MARL algorithms on the various tasks in the \textit{epymarl} benchmark~\citep{papoudakis2020benchmarking}.
We propose Multi-timescale Proximal Policy Optimization (MTPPO) which is based on the IPPO algorithm and Multi-timescale Q-Learning (MTQL) which is based on the IQL algorithm. The pseudo-code for MTPPO  is shown in Algorithm~\ref{pseudocode:crossval}; the pseudo-code for MTQL can be expressed similarly.
\section{Experiments}\label{exp}
We evaluate our hypothesis that agents learning at different timescales improve decentralized cooperative deep MARL compared to agents learning independently at one timescale through rigorous experiments. This section is organized as follows: in section \ref{exp_setup}, we explain the experimental setup used, in section \ref{mtsl_study}, we compare the performance of using multi-timescale learning with independent learning, and in section \ref{analysis}, we provide a detailed analysis of the experimental results using multi-timescale learning.


\subsection{Experimental Setup}\label{exp_setup}
We consider 12 tasks from four different and complex cooperative MARL environments from \textit{epymarl} benchmark \citep{papoudakis2020benchmarking}: Multi-Agent Particle Environment (MPE)~\citep{lowe2017multi},  StarCraft Multi-Agent Challenge (SMAC)~\citep{samvelyan2019starcraft}, Level-Based Foraging (LBF)~\citep{albrecht2015game}, and Multi-Robot Warehouse (RWARE)~\citep{christianos2020shared}.
A brief description of the environments and tasks is provided below. 

\noindent\textbf{Multi-Agent Particle Environment (MPE)~\citep{lowe2017multi}:} MPE environment comprises two-dimensional navigation tasks that require coordination to be solved. We include three tasks from the MPE environment: Speaker-Listener, Adversary, and Tag. 

\noindent\textbf{Level-Based Foraging (LBF)~\citep{albrecht2015game}:}
In the LBF environment, agents should cooperate to collect food items that are scattered
randomly in a grid-world. We include three tasks from LBF environment: 8$\times$8-2p-2f-c, 10$\times$10-3p-3f and 15$\times$15-4p-3f with varying world-size, number of agents and food items. 

\noindent\textbf{Multi-Robot Warehouse (RWARE)~\citep{christianos2020shared}:} RWARE simulates a grid-world warehouse in which agents
(robots) must locate and deliver requested shelves to workstations and return them after delivery. We include three partially observable tasks from RWARE environment: tiny-4ag, tiny-2ag and small-4ag.  The convention for environment name is \{grid-size\}-\{player count\}ag.

\noindent\textbf{StarCraft Multi-Agent Challenge (SMAC)~\citep{samvelyan2019starcraft}:} SMAC simulates battle scenarios in which a team of controlled agents must destroy an enemy team. We include three tasks from SMAC environment: MMM2 (10 agents), 3s5z (8 agents) and 3s\_vs\_5z (3 agents) with a different number of agents and levels of difficulty.
For all these environments, refer to Appendix \ref{app:env-hyp-details} for detailed descriptions of the tasks and hyperparameters.

\begin{figure*}[t!]
\centering
\begin{subfigure}[MTPPO (final performance)]{
    \centering
    \includegraphics[ trim={0 0 0 10cm}, width=0.48\linewidth]{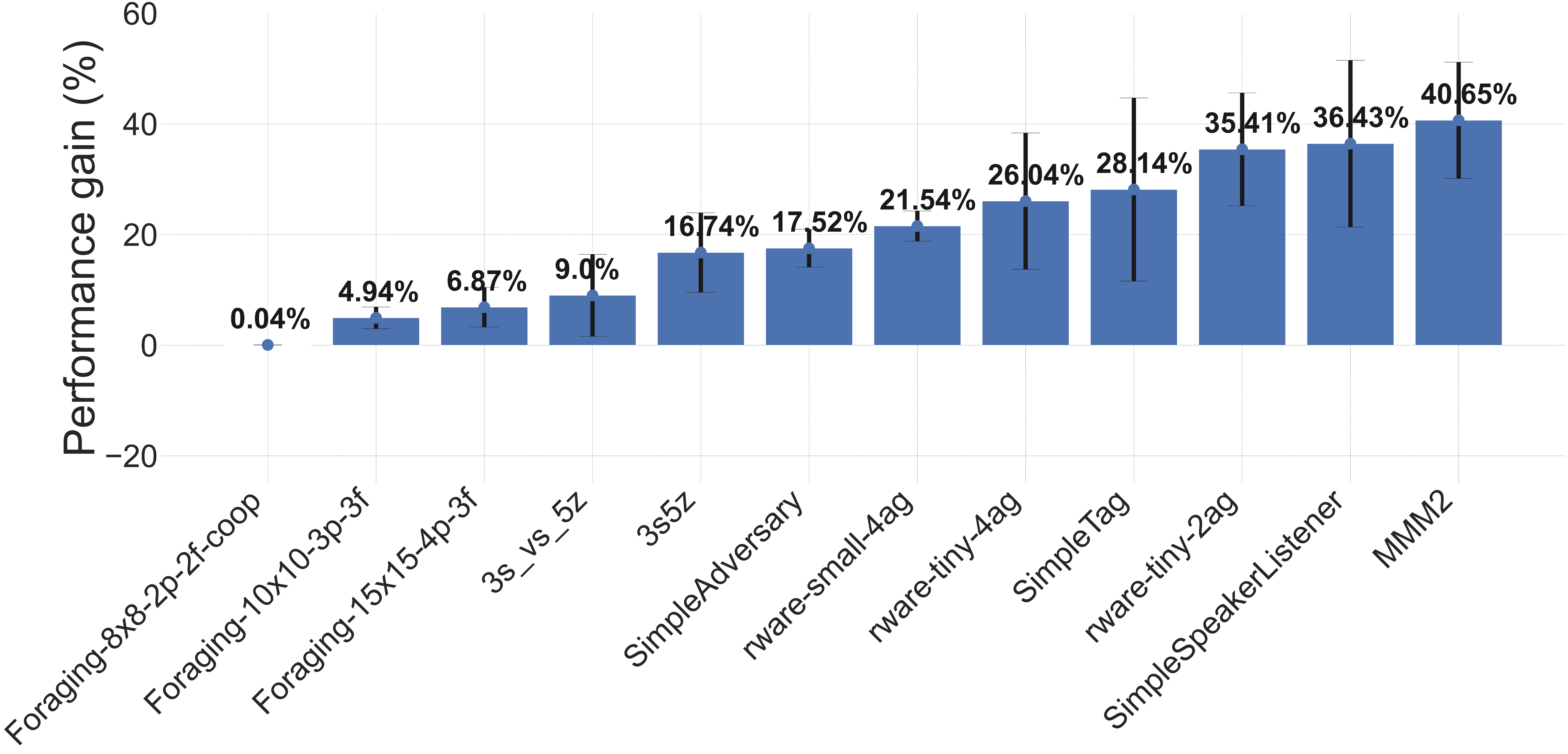}
    }
\end{subfigure}%
\begin{subfigure}[MTQL (final performance)]{
    \centering
    \includegraphics[ trim={0 0 0 0},clip, width=0.48\linewidth]{figs/final_performance/iql_staged_ns_Final_gain.pdf}
    }
\end{subfigure}%
\begin{subfigure}[MTPPO (AUC)]{
    \centering
    \includegraphics[ trim={0 0 0 10cm}, width=0.48\linewidth]{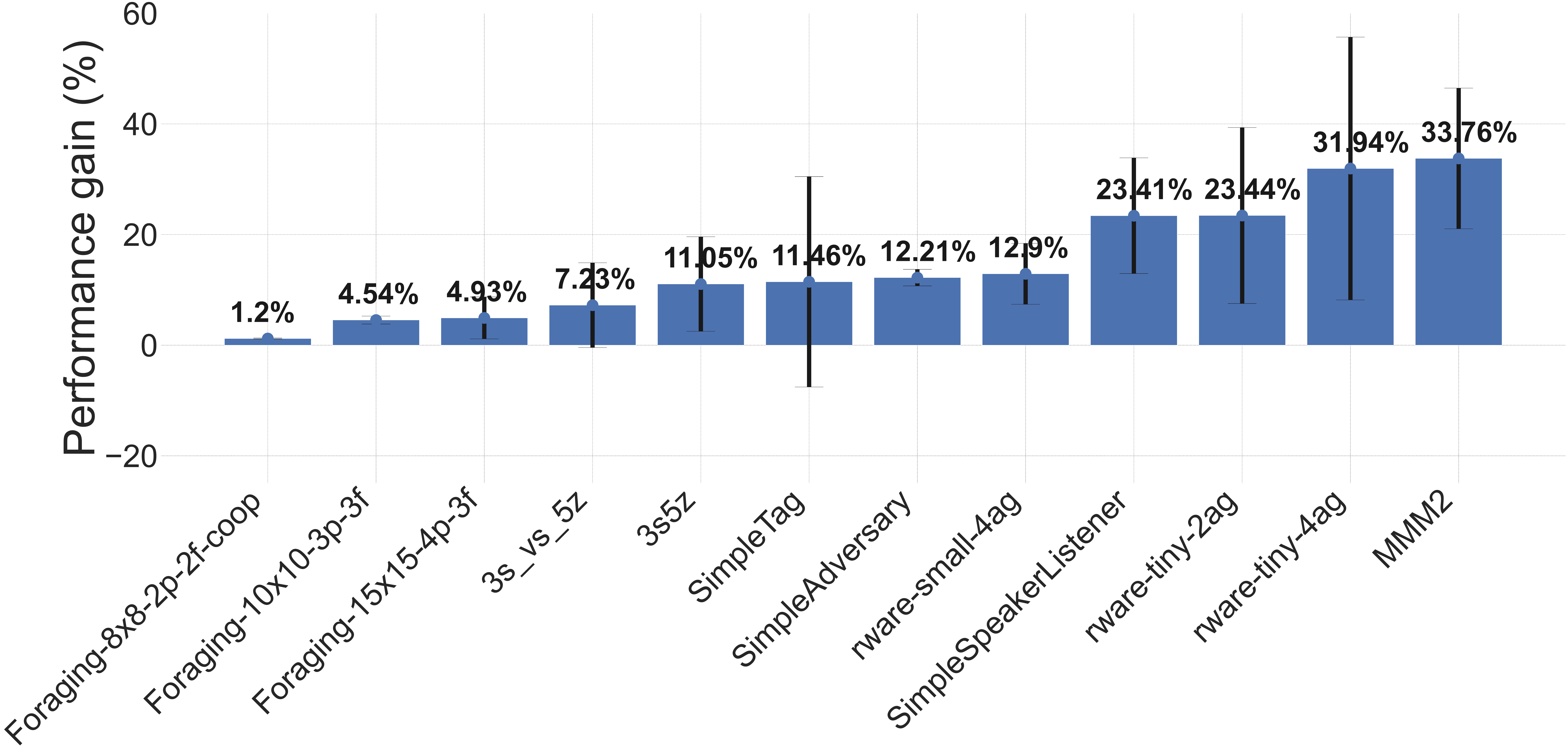}
    }
\end{subfigure}%
\begin{subfigure}[MTQL(AUC)]{
    \centering
    \includegraphics[ trim={0 0 0 0},clip, width=0.48\linewidth]{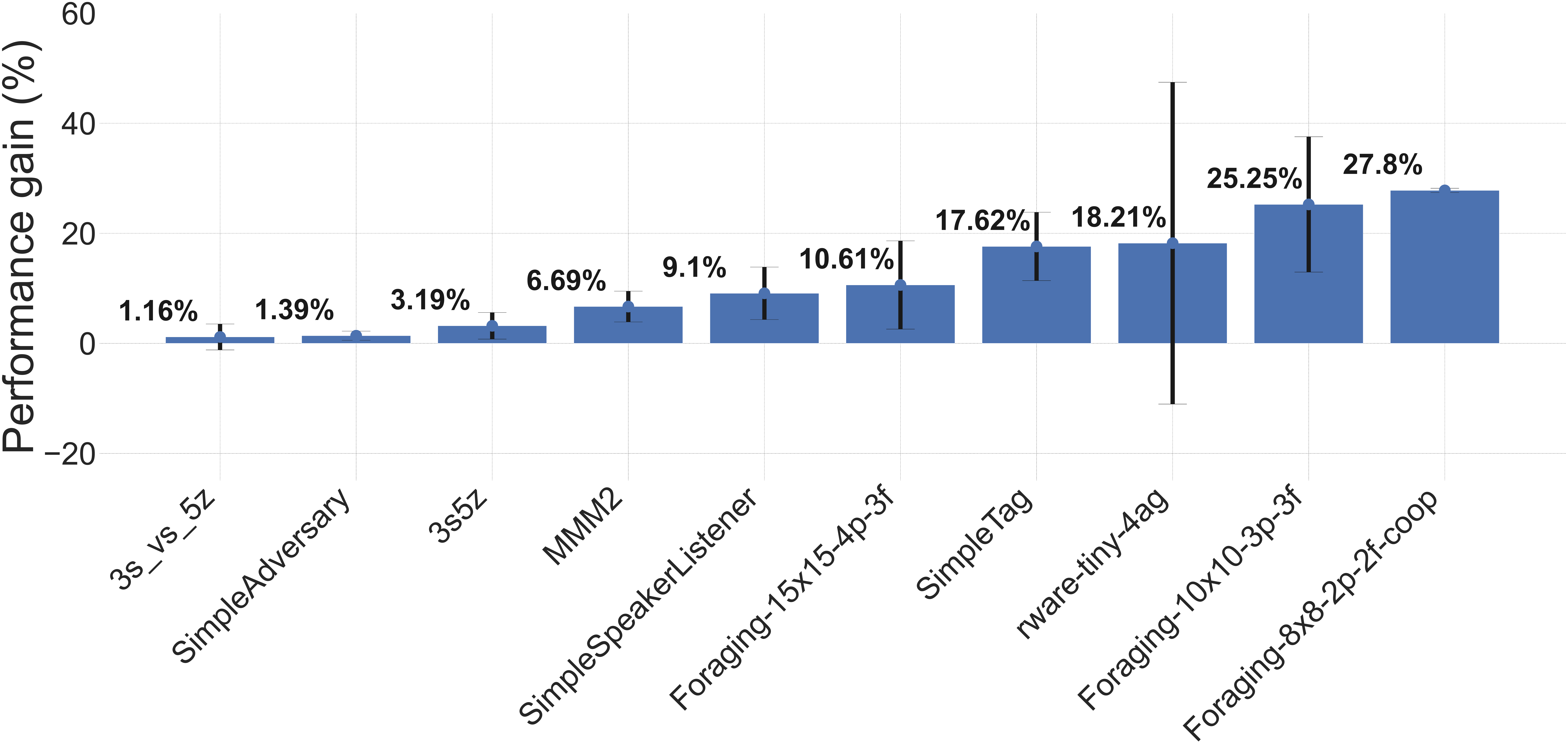}
    }
\end{subfigure}%
\caption{\em Performance gain of MTPPO and MTQL relative to IPPO and IQL on all 12 tasks. IQL and MTQL almost have zero return on small-4ag and tiny-2ag. That's why these two tasks are excluded from (b). MTPPO always either improves or performs as well as IPPO with the highest gains in MMM2, RWARE-tiny-\{4ag, 2ag\}, and MPE-Speaker-Listener. Similar performance gains can be seen with MTQL as well across all tasks with maximum gains in RWARE-tiny-4ag and Foraging-10$\times$10-3p-3f. Error bars represent the standard error over 5 seeds.}
\label{fig:perf-gain} 
\vspace{-5mm}

\end{figure*}



\subsection{Multi-Timescale vs Independent Learning}\label{mtsl_study}

In all our multi-timescale experiments, we have two timescales with learning rates $lr_0$ and $lr_1$. In the case of two agents, each agent learns with the respective learning rate, while in the case of more than two agents, one agent learns with $lr_0$ and the rest with $lr_1$. 
We perform hyperparameter search by varying the learning rates $(lr_0, lr_1)$ over $L \times L$, where $L$ is chosen by considering the learning rates around the best hyperparameters reported in~\citet{papoudakis2020benchmarking}. Refer to Appendix~\ref{app:env-hyp-details} for details. We vary the switching period hyperparameter $s \in \{1, 10, 10^2, 10^3, 10^4\}$. In IPPO, critic and actor updates might be done with different frequencies. In such a case switching is done based on critic training steps.
%

We compare the performance of MTPPO and MTQL with those of IPPO and IQL. Note that, by definition, multi-timescale learning includes $lr_0 = lr_1$ as well. Therefore, to ensure that any improvement in the performance reported is the result of having \textit{different} learning rates, we do not report the results of $lr_0 = lr_1$ for MTPPO and MTQL.

\begin{wraptable}{r}{4cm}
    \vskip -\baselineskip
    \centering
    \caption{Aggregate performance across all 12 tasks.}
    \begin{tabular}{@{}ccc@{}}
    \toprule
        &  Mean & Median\\
    \midrule
        IQL & $0.363$ & $0.464$  \\
         MTQL & $0.404$ & $0.507$  \\
    \midrule
        IPPO & $0.534$ & $0.578$  \\
         MTPPO & $0.599$ & $0.714$  \\
    \bottomrule
    \end{tabular}
    \vskip -\baselineskip
    \label{tab:aggregate_peformance}
\end{wraptable}
Following~\citet{papoudakis2020benchmarking}, we normalize the returns of all algorithms in each task in the [0, 1] range using the following formula: $(G^a_t - min(G_t)) / (max(G_t) - min(G_t))$ where $G^a_t$ is the return of algorithm $a$ in task $t$, and $G_t$ is the returns of all algorithms in task $t$. As shown in Table~\ref{tab:aggregate_peformance}, we report the aggregate performance of the algorithms across all the 12 environments.

Figure~\ref{fig:perf-gain} shows the performance gain of MTPPO relative to IPPO as well as MTQL relative to IQL across the 12 tasks. MTPPO always either improves or performs as good as IPPO with highest gains in MMM2, RWARE-tiny-\{4ag, 2ag\}, and MPE-Speaker-Listener. Similar performance gains can be seen with MTQL as well across all tasks with maximum gains in RWARE-tiny-4ag and Foraging-10$\times$10-3p-3f. This also shows that multi-timescale learning can be effective for tasks with different varying numbers of agents (from 2 - 10 in our tasks). 
Detailed observations on each environment are as follows:  

\noindent\textbf{MPE:} In the case of Speaker-Listener and Adversary, MTPPO almost always performs better than IPPO while in the case of Tag, MTPPO and IPPO have comparable performance. MTQL clearly performs better than IQL in all tasks.

\noindent\textbf{LBF:} MTPPO performs better than IPPO in all these environments.
A similar trend is seen with MTQL and IQL. 

\noindent\textbf{RWARE:} As we can observe from Figure~\ref{fig:perf-gain}, MTPPO almost always performs better than IPPO in the case of tiny-4ag and tiny-2ag, while the performance is comparable in small-4ag. IQL and MTQL almost have zero return on small-4ag and tiny-2ag. To avoid reporting misleading performance gains, these two tasks are excluded. However, MTQL clearly performs better than IQL on tiny-4ag. 

\noindent\textbf{SMAC:} In all three tasks, MTPPO performs consistently better than IPPO. MTQL outperforms IQL in MMM2 and 3s5z, while we see comparable performance in 3s\_vs\_5z.

\subsection{Analysis}\label{analysis}
We perform a detailed analysis of our experiments to study the following questions:


\begin{figure}[t!]
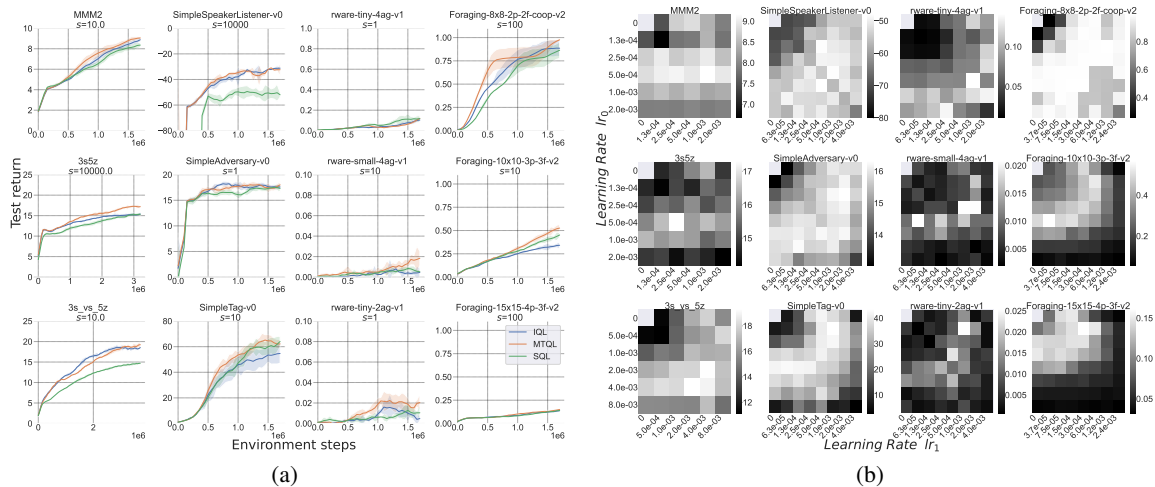

\centering
\begin{subfigure}[]{
    \centering
    \includegraphics[ width=0.45\linewidth]{figs/test_curves/iql_staged_ns_curves.pdf}
    }
\end{subfigure}
\begin{subfigure}[]{
    \centering
    \includegraphics[ width=0.45\linewidth]{figs/heatmaps/collas_iql_staged_ns_best_SIQL_heatmaps.pdf}
    }
\end{subfigure}\label{fig:best-MTQL-heatmaps-learning-curves}
\caption{\em (a) Learning curves for IQL, MTQL, and SQL. MTQL improves the performance in almost all the tasks compared with Sequential Q-learning (SQL) (where one of the learning rates is zero). SQL sometimes outperforms IQL, for example, in SimpleTag and Foraging $10\times10$ while still worse than MTQL.
(b) Final performance of IQL (diagonal), SQL (top row and leftmost column), and MTQL (the entire grid) with different learning rate combinations. These heatmaps are for the best switching period values. In many tasks, the best performance results from an off-diagonal learning rate combination, which is possible only through multi-timescale learning.}
\label{fig:best-MTQL-heatmaps-learning-curves} 
\vspace{-5mm}
\end{figure}

\medskip\noindent
\textbf{Does multi-timescale learning accelerate sequential learning?} Figure~\ref{fig:best-MTQL-heatmaps-learning-curves}(a) shows the learning curves for the best version of IQL, MTQL, and Sequential Q-learning (SQL). We implement sequential learning by choosing a zero learning rate for the slower agent. The switching period is still optimized for sequential learning. MTQL improves both the performance and sample complexity in almost all the tasks compared with SQL, and also IQL. Interestingly, sequential learning sometimes outperforms independent learning, for example, in SimpleTag and Foraging 10$\times$10-3p-3f while still worse than multi-timescale learning.
For more learning curves, see Appendix~\ref{app:learning_curves_heatmaps}. 



\medskip\noindent
\textbf{How does multi-timescale learning's performance vary with different timescales (learning rates)?} We report the performance results of MTQL, SQL, and IQL for each combination of learning rates across all switching periods in Figure \ref{fig:best-MTQL-heatmaps-learning-curves}. Diagonal values with the same learning rates denote IQL. The top row and the leftmost column of the heatmaps show the performance of SQL, and the remaining  off-diagonal values represent multi-timescale learning excluding independent learning and sequential learning. In several tasks, there seems to be a pattern with higher performance resulting from these off-diagonal learning rates corresponding to multi-timescale learning rates which are non-zero and different from each other. For the heatmaps of MTPPO, refer to Appendix~\ref{app:learning_curves_heatmaps}.

\begin{figure}[t!]
\vskip -\baselineskip
\centering
    \includegraphics[ width=0.95\linewidth]{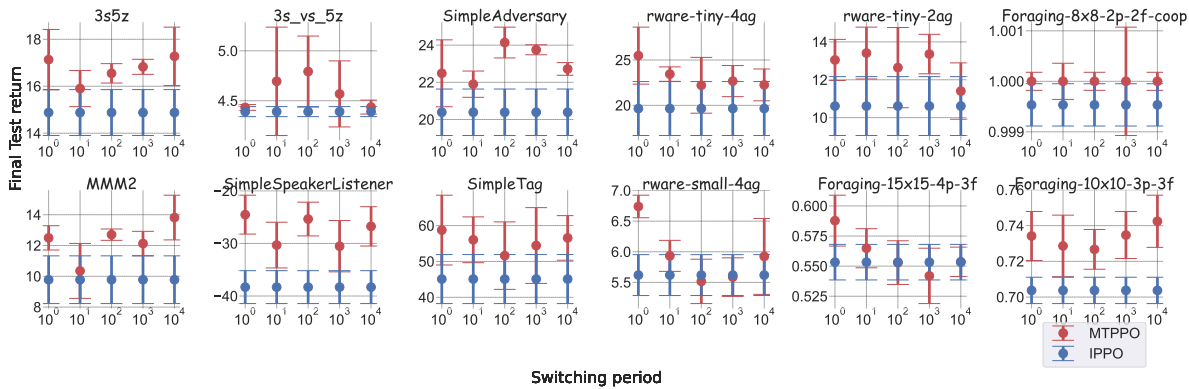}
\caption{\em Performance of IPPO, IQL, and their Multi-timescale version vs switching period for each task. At each switching period, the best-performing multi-timescale learning with different learning rates is compared with the best-performing independent learning with the same learning rates. Error bars represent the standard error over 5 seeds.}\label{fig:perf-gain-errorbars}
\vspace{-5mm}
\end{figure}

\medskip\noindent
\textbf{How does multi-timescale learning's performance vary with different switching periods?}
Figure~\ref{fig:perf-gain-errorbars} shows MTQL's performances across different switching periods (Refer to Figure~\ref{app:perf-gain-errorbars} for MTPPO's results). Overall multi-timescale learning seems to be robust with respect to the choice of the switching period.
There are two cases where finding a suitable switching period seems necessary for good performance. 
Firstly, in tasks where the difference between the best combination of learning rates is high, such as Foraging-10$\times$10-3p-3f, and Foraging-15$\times$15-4p-3f (Figure \ref{fig:best-MTQL-heatmaps-learning-curves}), we observe in Figure \ref{fig:perf-gain-errorbars} that there is a sweet spot for switching.


\begin{wraptable}{r}{8cm}
    \vskip -\baselineskip
    \centering
    \caption{Gap between CTDE and DT recovered by Multi-timescale learning. For MTPPO, the considered gap is between IPPO and $max$(MAPPO, MAAC2) and for MTQL the gap is between IQL and $max$(VDN, QMIX).}
    \begin{tabular}{@{}ccccc@{}}
    \toprule
        &  MPE & SMAC & LBF & RWARE \\
    \midrule
        MTPPO & $15.63\%$ & $57.65\%$ & $29.82\%$ & $19.09\%$  \\
         MTQL & $56.75\%$ & $22.60\%$ & $18.26\%$ & -- \\
    \bottomrule
    \end{tabular}
    \vskip -\baselineskip
    \label{tab:gap_ctde_dt}
\end{wraptable}

\medskip\noindent
\textbf{On which tasks does multi-timescale learning help the most?} We observe that multi-timescale learning helps more in environments where there is a gap between CTDE and independent learning. 
For example, environments such as MPE Speaker-Listener, hard SMAC, and RWARE, where CTDE outperform independent learning (as shown in~\citet{papoudakis2020benchmarking}), are also environments where multi-timescale learning helps a lot compared to their independent learning counterparts. 
We hypothesize that while CTDE helps reduce non-stationarity by sharing information about other agents observations and actions, thereby improving coordination and reducing variance \citep{lowe2017multi,yang2018cm3, das2019tarmac, li2019robust},
multi-timescale learning improves performance by also reducing non-stationarity, but by controlling non-stationarity that arise from other agents learning concurrently. We would like to emphasize that we do not expect our method to recover all the gap between Decentralized Training (DT) and CTDE since CTDE has access to more information during training than our method that uses DT. However, to provide a picture of where the performance of different methods across different training schemes stand, we computed the percentage of the performance gap between DT and the best CTDE method among Multi-agent PPO (MAPPO)\citep{yu2021surprising}, Multi-agent Actor-Critic (MAAC2)\citep{papoudakis2020benchmarking}, Value Decomposition Networks (VDN) \citep{sunehag2017value}, and QMIX \citep{rashid2018qmix} for each environment, that our proposed Multi-timescale DT bridges as $\frac{(\mathit{MDT}-\mathit{DT})*100}{\mathit{CTDE} - \mathit{DT}}$. As shown in Table~\ref{tab:gap_ctde_dt}, MTPPO and MTQL managed to recover some part of the gap between CTDE and DT.

\begin{figure*}[b!]
\centering
\begin{subfigure}[MTPPO]{
    \centering
    \includegraphics[ trim={0 0 0 10cm}, width=0.48\linewidth]{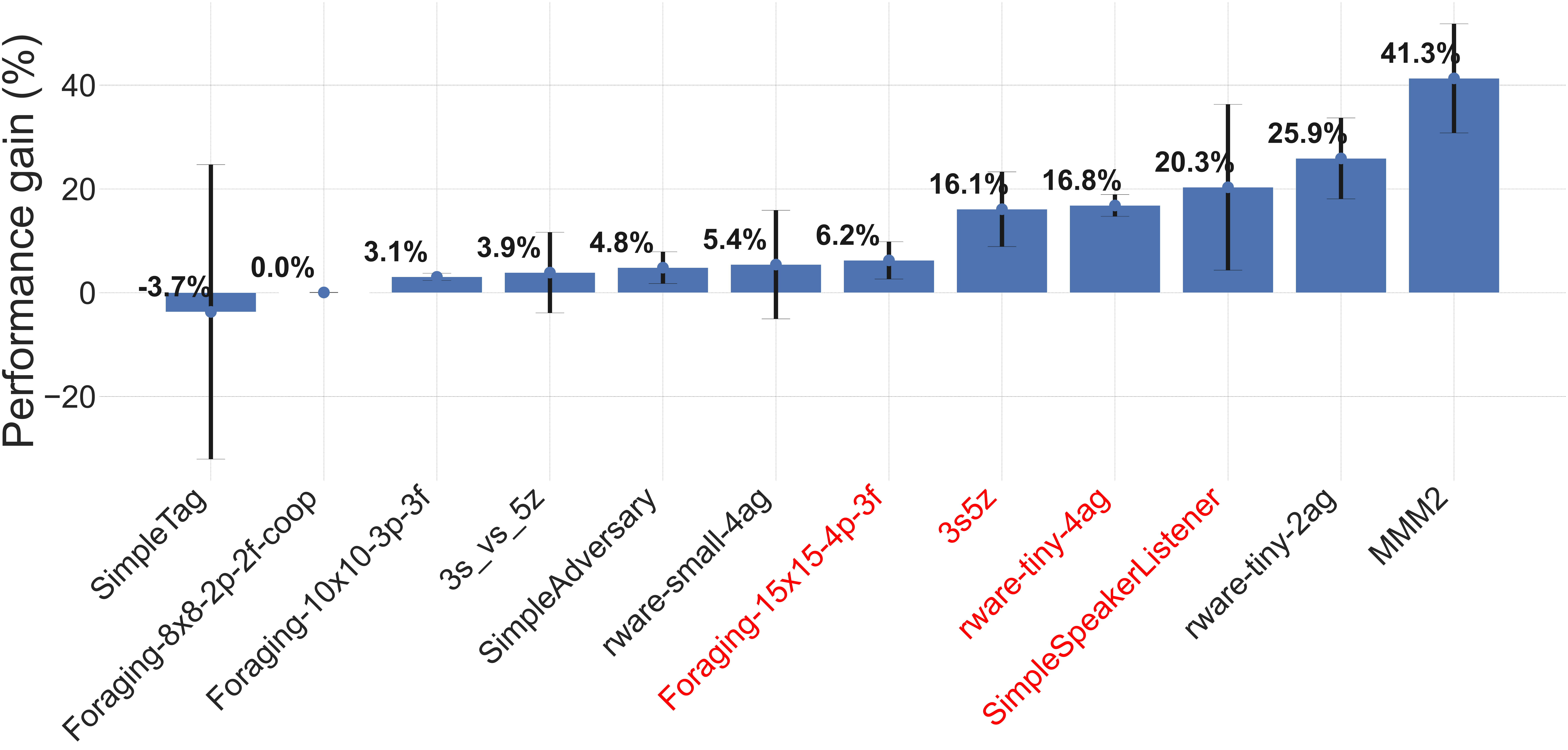}
    }
\end{subfigure}%
\begin{subfigure}[MTQL]{
    \centering
    \includegraphics[ trim={0 0 0 0},clip, width=0.48\linewidth]{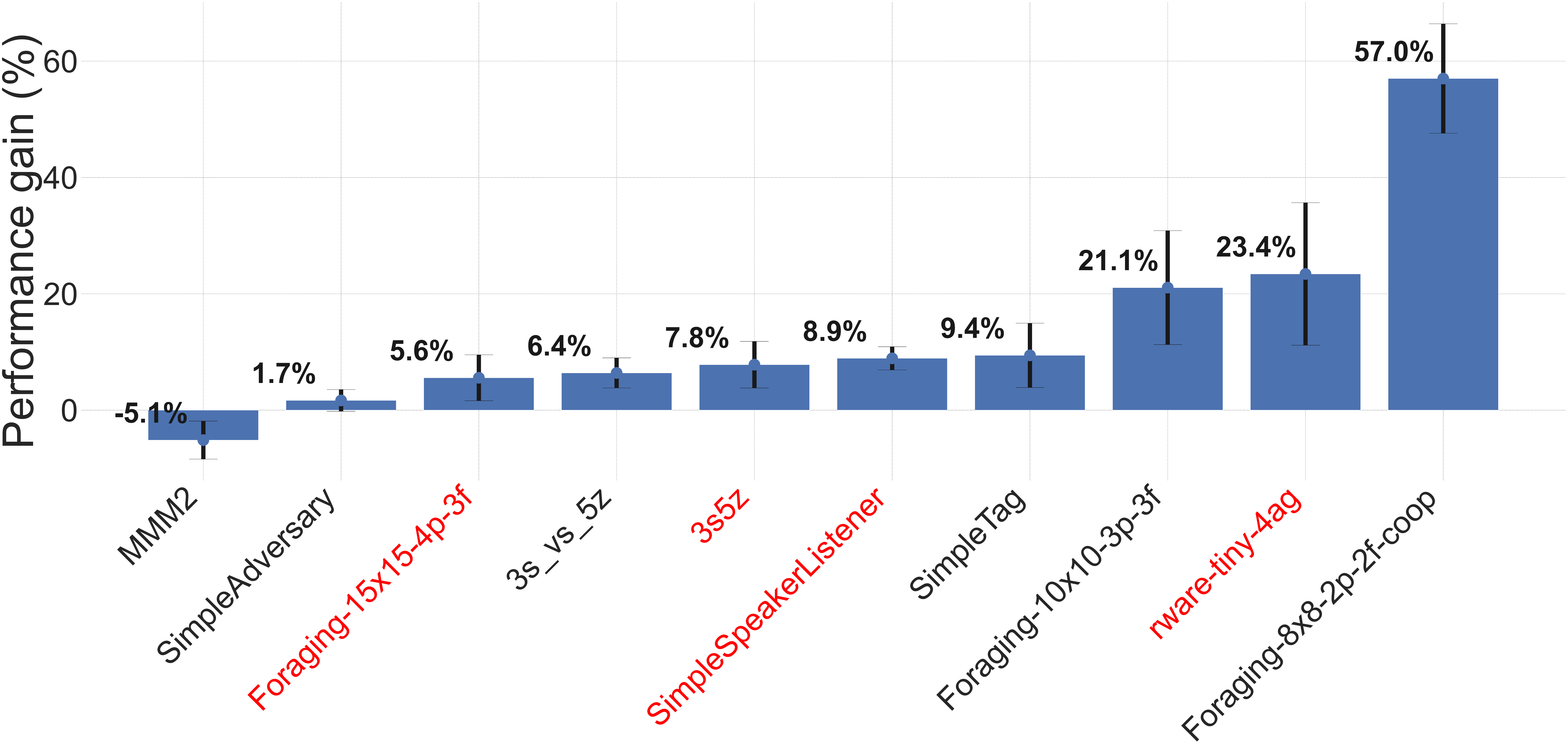}
    }
\end{subfigure}%
\caption{\em Performance gain of MTPPO and MTQL relative to IPPO and IQL on all 12 tasks. \textbf{HPs are tunned only for the tasks shown in red.} These results show that the improvements of MTPPO and MTQL over the baselines are relatively robust to HP tunning. Error bars represent the standard error over 5 seeds. }
\label{fig:perf-gain-subset-HP-tuning} 
\vspace{-5mm}

\end{figure*}
\medskip\noindent
\textbf{How robust multi-timescale learning is with respect to the hyper-parameteres?}
To assess the robustness of multi-timescale learning in relation to hyperparameters (HPs), we employed the HP tuning approach described by~\citet{papoudakis2020benchmarking}. Rather than tuning the HPs individually for each task, we conducted HP tuning for a selected subset of tasks. Specifically, we determined the optimal learning rates for both multi-timescale learning and independent learning on Speaker-Listener, Foraging-15$\times$15-4p-3f, RWARE-tiny-4ag, and 3s5z. Subsequently, we evaluated the performance of these methods on other tasks within the same environment, utilizing the identified hyperparameters. Figure~\ref{fig:perf-gain-subset-HP-tuning} presents the corresponding results. The outcomes are varied: while some tasks showed a decline in relative performance gain, others exhibited a higher performance gain for multi-timescale learning. Overall, our approach consistently outperforms the baseline in the majority of tasks, demonstrating the robustness of multi-timescale learning.

\begin{wrapfigure}{r}
{0.4\textwidth}
\centering
\begin{subfigure}[MTQL performance gain]{
    \includegraphics[ width=1.05\linewidth]{figs/orion_results/SIQL_Orion_final_perf_gain.pdf}
    }
\end{subfigure} \\
\begin{subfigure}[Optimization curve]{
    \includegraphics[ width=1.1\linewidth]{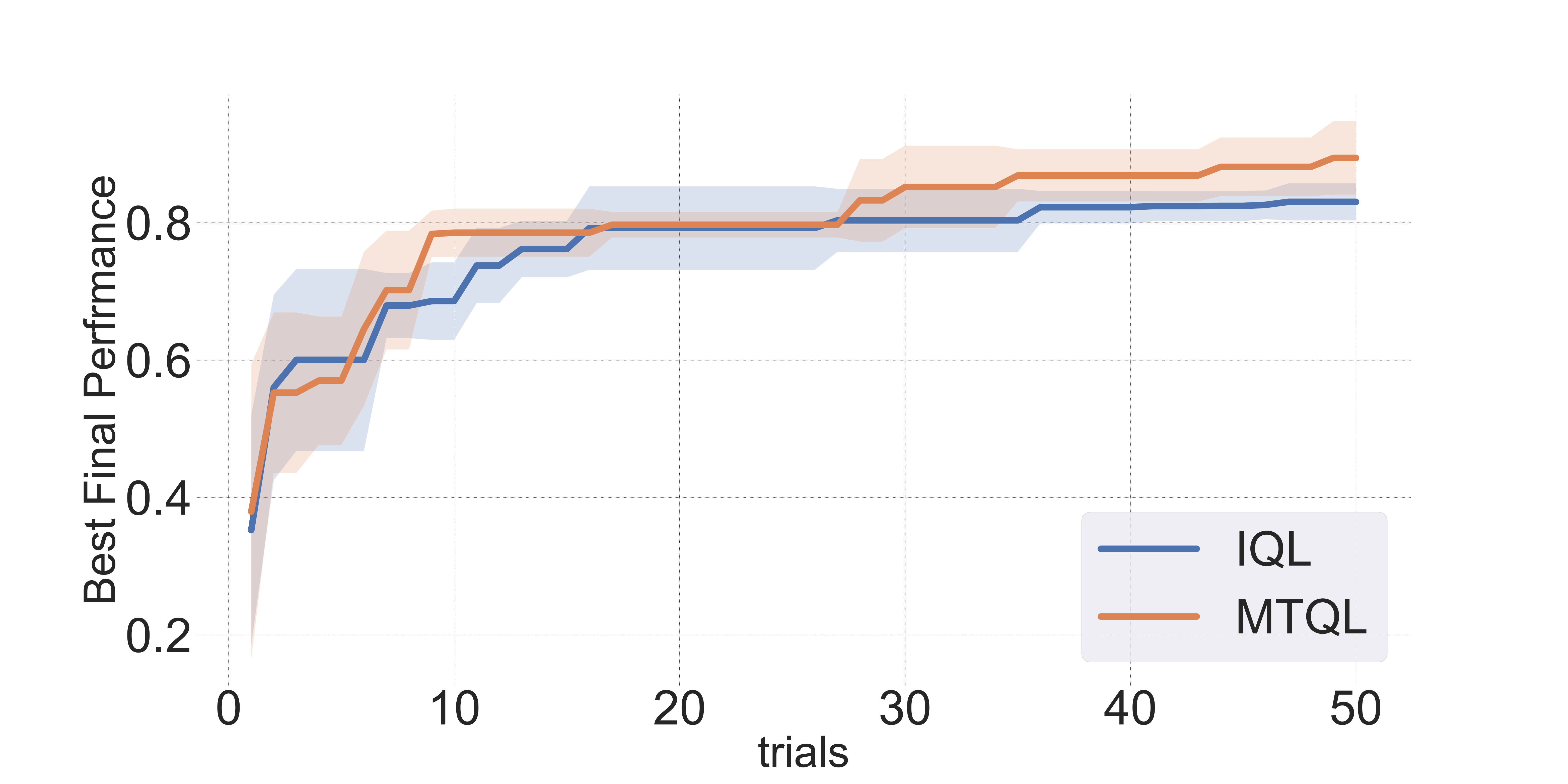}
    }
\end{subfigure}%
\caption{\em (a) Performance gain of MTQL relative to IQL for a budget of 50 trials. In each run, both methods start with the same initial of 20 points, and the TPE method is used for optimization. The error bars represent the standard error across 5 optimization processes. (b)  optimization curve for Foraging-8x8-2p-2f task }
\label{fig:orion_results}
\end{wrapfigure}

\medskip\noindent
\textbf{How does multi-timescale learning perform compared to independent learning under low computation budget?} Here, we test how multi-timescale learning performs relative to independent learning when having access to a fixed low computation budget. To do so, we use hyperparameter tuning tools and show that multi-timescale learning has competitive performance even under a low computation budget. To this end, we perform hyperparameter tuning for both the baselines (only one hyperparameter for IQL and IPPO: $lr0$) and their multi-timescale versions (with three hyperparameters: $lr0$, $lr1$, $s$) with  Orion~\citep{xavier_bouthillier_2022_0_2_3}. In particular, we used the Tree-structured Parzen Estimator (TPE) algorithm which is one of the Sequential Model-Based Global Optimization (SMBO) algorithms. In these experiments, we start with 20 random initial samples of hyperparameters. For each trial, learning rates are sampled from a \textit{loguniform} distribution and the switching period is chosen from a categorical distribution. The final evaluation return is averaged over 3 independent runs and returned to the TPE algorithm to propose the next set of hyperparameters. TPE proposes 30 extra hyperparameters sequentially so the total number of trials is 50 per method. Then we let both the methods run this optimization process 5 independent times and take the average over these runs to get the final curves like in Fig][(b). We report the results of MTQL on three different environments and 10 tasks in Fig~\ref{fig:orion_results}(a). Since the IQL agent almost achieves zero performance on two out of three RWARE tasks, we did not include the results since any amount of improvement had a high variance. See Appendix~\ref{app:orion_exp} for other optimization curves and more results. With the limited budget of 50 trials, MTQL performs better than IQL in 7 out of 10 tasks. 
Note that multi-timescale learning scales better with more compute. As more compute budget is provided, the results will get closer to the result reported in Figure \ref{fig:perf-gain} (b), with significantly more performance improvement on all 10 tasks, and also independent learning is a special case of multi-timescale learning.



\section{Related Work} 
\label{related_work}

%


\textbf{Handling non-stationarity in cooperative MARL:} A lot of work in the cooperative MARL literature has focused on \emph{centralized training and decentralized execution} (CTDE)
\citep{sunehag2017value, lowe2017multi, rashid2018qmix,hostallero2019learning, mao2020information}.
Although CTDE is able to circumvent the conceptual challenges of non-stationarity of the environment, it may not always be possible to perform centralized training in the first place.
For example, in online real-world settings like self-driving cars. Moreover, centralized critic suffers from the curse of agents \citep{mao2022improving, wang2020breaking} --- the size of joint action space increases exponentially with the number of agents.

However, there have been relatively few efforts in handling non-stationarity in decentralized cooperative MARL settings. 
\citet{foerster2017stabilising} propose a method to reduce the effect of non-stationarity in IQL by conditioning each agent's value function on a time-dependent fingerprint and report promising results on StarCraft unit micro-management \citep{samvelyan2019starcraft}.
A theoretical justification for the decentralized IPPO is provided in~\citet{sun2022monotonic}, which guarantees monotonic improvements by forming a trust region over joint policies and provides some insight on how to form the trust region for each agent individually. However, these guarantees do not extend to IQL because it is not based on trust regions.
Self-play for zero-sum games deals with the non-stationarity problem by playing against several past versions of itself. However, in the case of cooperative environments, self-play does not exploit the fact that the agents can cooperate to optimize the common objective.
Self-play also has been shown to learn arbitrary policies that do not generalize when cooperating with novel partners \citep{bard2020hanabi, nekoei2021continuous}.

With respect to agents learning at different rates, there have been some works that are based on optimistic heuristics for updating the learning rates in cooperative environments. Work by~\citet{matignon2007hysteretic}  proposes Hysteretic Q-learning in which the Q-values are updated with a higher learning rate when getting a reward better than the expected state-action value and \citet{omidshafiei2017deep} implemented Deep Hysteretic Q-learning. 
Note that this approach is complementary to multi-timescale learning and will be an interesting future work to evaluate multi-timescale hysteretic Q-learning.
A classic work by~\citet{bowling2002multiagent} focuses specifically on varying the learning rate on a restricted class of iterated matrix games.

\medskip
\noindent\textbf{Sequential learning and two-timescale learning:} The idea of sequential learning goes back to fictitious play~\citep{brown1951iterative}. However, the combination of Fictitious Self-Play (FSP) with deep RL was proposed recently by \citet{heinrich2015fictitious}. \citet{lanctot2017unified} proposed Policy-Space Response Oracles (PSRO) to generalize the IIBR, SIBR, and fictitious play methods. PSRO is focused on tackling overfitting in MARL while our approach aims to tackle non-stationarity in the Decentralized Training setting. Moreover, PSRO does not explore this idea of soft learning where we allow a fast learner and a slow learner simultaneously instead of IIBR (both fast) and SIBR (one fast and one not learning). Finally, in contrast to PSRO, our method is conceptually much simpler since it does not require the creation of a population of BR agents and the computation of meta-strategies.

Recently, sequential learning has been studied by a series of works \citep{bertsekas2020multiagent, bertsekas2021multiagent}, laying down the theoretical foundations for agent-by-agent policy iteration, value iteration methods, and their optimality guarantees. These works show the promise of SIBR but they are still limited to fully-observable settings. In our work, we are proposing a new setting with switching learning rates and we believe more theoretical work on switching Ordinary Differential Equations (ODEs) is needed, which are beyond the scope of this work, but certainly should be done in future works to understand the method that shows very promising empirical results.

There has been some recent work on competitive decentralized training using two-timescale optimization providing convergence guarantees. A two-timescale decentralized algorithm was developed for zero-sum games by~\citet{sayin2021decentralized}, where each agent updates its local Q-function and state-value function estimates concurrently, the latter happening at a slower timescale without requiring asymmetric update rules. Also, \citet{daskalakis2020independent} show that in a zero-sum game, when two competitive policy gradient-based agents learn simultaneously and their learning rates follow a two-timescale rule, their policies converge to a min-max equilibrium. However, these results are all still limited to zero-sum games.

\section{Conclusion and Future Work}\label{conclusions}

The commonly used training scheme for decentralized cooperative deep MARL has been independent learning based on IIBR, which suffers from the non-stationarity of other simultaneous learning agents. Sequential learning on the other hand can circumvent this issue, but it is slow since only one agent learns at any time.
In this work, we proposed using the framework of multi-timescale learning, where different agents are learning concurrently at different learning rates for decentralized cooperative deep MARL. In our proposed instantiation of multi-timescale learning, agents learn one after another like in sequential learning, but while one agent learns, all other agents also concurrently update their policies but at a slower learning rate, minimizing the issue of non-stationarity, while not making the overall learning very slow. 
Our evaluation of  Multi-timescale PPO (MTPPO) and Multi-timescale QL (MTQL) on 12 complex cooperative MARL tasks from the \textit{epymarl} benchmark showed that multi-timescale versions outperform both their independent and sequential counterparts in most of the tasks.

This work empirically presented multi-timescale learning as a promising framework for decentralized cooperative deep MARL. Conducting more theoretical work to understand the learning dynamics in multi-timescale learning can be an exciting future work. Even though in this paper, we focused on two standard decentralized algorithms, multi-timescale learning can be applied to other decentralized and even centralized methods like multi-agent PPO (MAPPO)~\citep{yu2021surprising}. In this work, we evaluated MTPPO and MTQL with only two timescales. We assumed that in the case of more than two agents, only one agent is learning at a different timescale while other agents are learning at the same timescale. Clearly, there are more ways to cluster the agents, which might be useful especially if the environment and task are such that there are dependencies and independencies between a certain subset of agents. Evaluation of multi-timescale learning with other algorithms, more timescales, different clustering protocols, and in non-cooperative settings are very interesting future work. Moreover, in the current setup, the agents need to agree upon the learning rate schedule in advance (what learning rates to use and with what frequencies to switch). Although it is a reasonable assumption that agents can agree to follow some protocol in advance in many MARL scenarios, one potentially promising idea is to adaptively tune the learning rates.
Overall, we hope this work is a first step towards more decentralized cooperative deep MARL methods based on multi-timescale learning. 


\section*{Acknowledgements}

We like to acknowledge Compute Canada for providing compute resources for this work. The work of AS and AM was supported in part by NSERC Alliance Grant. SC is supported by a Canada CIFAR AI Chair and an NSERC Discovery Grant.
JR is supported by IVADO postdoctoral research funding. We also would like to thank Xutong Zhao, Sai Krishna, and Sriyash Poddar for their valuable comments on the paper. 

\bibliography{collas2023_conference}
\bibliographystyle{collas2023_conference}

\newpage
\appendix

\section*{\Large Appendices}

\section{Multi-agent estimation problem}\label{app:MTMSE}
Consider a general three player minimum team mean squared optimization problem with $x \sim \mathcal{N}(0,1)$, $y_i = x + w_i$ where $w_i \sim \mathcal{N}(0, \sigma^2)$ where the objective is to choose $\hat z_i = \mu_i(y_i)$ to minimize an estimation cost of the form
\[
    \mathbf{E}[ (x \mathbf{1} - \hat z)^T S (x \mathbf{1} - \hat z) ]
\]
where $\hat z = \text{vec}(\hat z_1, \hat z_2, \hat z_3)$ and 
\[
S = 
  \begin{bmatrix}
    p & q & q \\[3pt]
    q & p & q \\[3pt]
    q & q & p
  \end{bmatrix} 
\]
According to \cite[Theorem 1]{afshari2021multi}, the team optimal estimation strategies are linear and of the form $\hat z_i = K_i y_i$, where $K = \text{vec}(K_1, K_2, K_3)$ is given by the solution of the linear system of equations
\[
   \Gamma K = \eta
\]
where 
\begin{equation}\label{eq:Gamma}
\Gamma = \begin{bmatrix}
    p(1+\sigma^2) & q & q \\[3pt]
    q & p(1+\sigma^2) & q \\[3pt]
    q &  q & p(1+\sigma^2)
  \end{bmatrix} 
\quad\text{and}\quad 
\eta = \begin{bmatrix}
    p + 2q \\[3pt]
    p + 2q \\[3pt]
    p + 2q
  \end{bmatrix}
\end{equation}

Now, the cost function in the example described in Sec.~\ref{SBR_PBR}, the estimation cost may be written as
\[
    \frac 19 \mathbf{E} \left[
    \begin{bmatrix}
    x - \hat{z}_1 \\[3pt]
    x - \hat{z}_2 \\[3pt]
    x - \hat{z}_3
  \end{bmatrix}^T
  \begin{bmatrix}
    p & q & q \\[3pt]
    q & p & q \\[3pt]
    q & q & p
  \end{bmatrix} \begin{bmatrix}
    x - \hat{z}_1 \\[3pt]
    x - \hat{z}_2 \\[3pt]
    x - \hat{z}_3
  \end{bmatrix} \right]
\]
where $p = 1$ and $q = 1$. In the model, it is also assumed that $\sigma^2 = 0.5$. Thus, \eqref{eq:Gamma} simplifies to:
\[
\Gamma = \begin{bmatrix}
    \frac 32 & 1 & 1 \\[3pt]
    1 & \frac 32 & 1 \\[3pt]
    1 & 1 & \frac 32
  \end{bmatrix} 
\quad\text{and}\quad 
\eta = \begin{bmatrix}
    3 \\[3pt]
    3 \\[3pt]
    3
  \end{bmatrix}.
\]

Iterative best response corresponds to solving the system $\Gamma K =
\eta$ iteratively as $K^{(t+1)} = M^{-1}(N K^{(t)} + \eta)$ for appropriate choice of $M$ and $N$. 
This may be viewed as a linear system $K^{(t+1)} = A K^{(t)} + B \eta$, which is stable when the eigenvalues of $A$ lie within the unit circle. 

We now compute the $A$-matrix for IIBR and SIBR. 
For ease of notation, we will write $\Gamma =
D + L + U$ where $D$ is the diagonal entries, $L$ is the lower triangular
entries (excluding the diagonal) and $U$ is the upper triangular entries (excluding the diagonal).
In IIBR, all agents update their policy at the
same time. So, for this example, IIBR is same as
the Jacobi method for solving a system of linear equations for which $M = D$ and $N = -(L+U)$. Hence $A_{\textit{IIBR}} = - D^{-1}(L+U)$.
\[
  A_{\textit{IIBR}} \coloneqq - D^{-1}(L+U) = 
  \begin{bmatrix}
    0 & -\frac 23 & -\frac 23 \\[3pt]
    -\frac 23 & 0 & -\frac 23 \\[3pt]
    -\frac 23 & -\frac 23 & 0
  \end{bmatrix}.
\]
Note that the eigenvalues of $A_{\textit{IIBR}}$ are $\{ -\frac 43, \frac
23, \frac 23\}$. Thus, the spectral radius of $A_{\textit{IIBR}}$ is
$\frac 43 > 1$ which is outside of the unit circle. Hence, IIBR does not converge. 

In SIBR, agents update their policies one by one.
So, for this example, the sequential iterative best response is the same as the Gauss
Seidel method for solving a system of linear equations for which $M = (D + L)$
and $N = -U$. Hence, $A_{\textit{SIBR}} = -(D + L)^{-1} U$.
\[
  A_{\textit{SIBR}} \coloneqq -(D + L)^{-1} U =
  \begin{bmatrix}
    0 & -\frac 23 & -\frac 23 \\[3pt]
    0 & \frac {4}{9} & -\frac 2{9} \\[3pt]
    0 & \frac {4}{27} & \frac {16}{27} 
  \end{bmatrix}.
\]
Note that the eigenvalues of $A_{\textit{SIBR}}$ are $\{0, \frac{1}{27}( 14
\pm \sqrt{20} i) \}$. Thus, the spectral radius of $A_{\textit{SIBR}}$ is
$6\sqrt{6}/27 < 1$. Hence, SIBR converges.

\section{Environment details and Hyperparameters} \label{app:env-hyp-details}
In this section, we give an overview of the tasks and environments we used for our experiments. Then, we list all the important hyper-parameters.

\textbf{MPE}: These are two-dimensional navigation tasks that require coordination. The observations of the agent include high-level feature vectors like relative agent and landmark locations.

\textbf{LBF}: In LBF, agents should collect food items that are scattered randomly in a grid-world. Agents and items are assigned levels such that a group of agents can collect an item only if the sum of their levels is greater or equal to the level of the item. The convention for environment name is \{grid\_size\}$\times$\{grid\_size\}-\{player count\}p-\{food locations\}f.

\textbf{RWARE}: The convention for environment name is \{grid-size\}-\{player count\}ag.

\textbf{SMAC}: MMM2 (a symmetric scenario where each team controls seven marines, two marauders, and one medivac unit), 3s5z (a symmetric scenario where each team controls three stalkers and five zerglings for a total of eight agents), and 3s\_vs\_5z (team of three stalkers is controlled by agents to fight against a team of five game-controlled zerglings). For SMAC experiments, we only consider 5 learning rates due to its higher computational requirement.

Some important experimental details are listed below: 
\begin{itemize}
    \item For an IPPO agent, we change the learning rate of both the actor and the critic. 
    \item In IPPO, critic and actor updates might be done with different frequencies. In such a case switching is done based on critic training steps.
    \item We use the Adam~\citep{kingma2014adam} optimizer in all experiments (we only change the learning rate hyperparameter). Even though Adam adaptively changes the gradient signal, we can still control its scale with changing the learning rate coefficient directly.
    \item In the case of IPPO, we change the learning rates of both the actor and critic.
    \item d) In the case of two agents, each agent learns with the respective learning rate while in the case of more than two agents, one agent learns with $lr_0$ and the rest with $lr_1$. 
\end{itemize}

\begin{table}[ht]
    \centering
    \small
    \caption{Hyperparameters for IPPO without parameter sharing.}
    \begin{tabular}{l cccccc}
    \\ \hline \\
     & {} &  MPE & SMAC & LBF & RWARE \\
    \\ \hline \\
        &\begin{tabular}{@{}c@{}}Hidden \\ dimension\end{tabular} & 128 & 64 & 128 &128\\
        &\begin{tabular}{@{}c@{}}Reward \\ standardisation\end{tabular}  & True & True & False &False \\
        &Network type  & FC & FC & GRU & FC\\
        &\begin{tabular}{@{}c@{}}Entropy \\ coefficient\end{tabular} & 0.01 & 0.001 & 0.001 &0.001 \\
        &Target update & \begin{tabular}{@{}c@{}}0.01 \\ (soft)\end{tabular} & \begin{tabular}{@{}c@{}}0.01 \\ (soft)\end{tabular} & \begin{tabular}{@{}c@{}}200 \\ (hard)\end{tabular} & \begin{tabular}{@{}c@{}}0.01 \\ (soft)\end{tabular} \\
        &n-step  & 10 & 10 & 5 & 10 \\
    \\ \hline \\
    \end{tabular}
    \label{tab:hyper_ippo_ns}
\end{table}

\begin{table}[ht]
\vspace{-10mm}
    \centering
    \small
    \caption{ Learning rates for MTPPO without parameter sharing.}
    \begin{tabular}{l ccccc}
    \\ \hline \\
     Environment &  Learning rates \\
    \\ \hline \\
        MPE & \begin{tabular}{@{}c@{}}$\{1.25\times10^{-5}, 2.5\times10^{-5}, 5\times10^{-5}, 1\times10^{-4},$\\ $ 2\times10^{-4}, 4\times10^{-4}, 8\times10^{-4}\}$\end{tabular}
        \\
        LBF & \begin{tabular}{@{}c@{}}$\{1.25\times10^{-5}, 2.5\times10^{-5}, 5\times10^{-5}, 1\times10^{-4},$\\ $ 2\times10^{-4}, 4\times10^{-4}, 8\times10^{-4}\}$\end{tabular}
        \\ 
        RWARE & \begin{tabular}{@{}c@{}}$\{6.25\times10^{-5}, 1.25\times10^{-4}, 2.5\times10^{-4}, 5\times10^{-4},$\\ $ 1\times10^{-3}, 2\times10^{-3}, 4\times10^{-3}\} $\end{tabular}
        \\
        SMAC & \begin{tabular}{@{}c@{}}$\{1.25\times10^{-4}, 2.5\times10^{-4}, 5\times10^{-4},$\\ $ 1\times10^{-3}, 2\times10^{-3}\}$\end{tabular}
        \\
    \hline \\
    \end{tabular}
    \label{tab:lr_ippo_ns}
\end{table}

\begin{table}[ht]
    \centering
    \footnotesize
    \caption{Hyperparameters for IQL without parameter sharing.}
    \begin{tabular}{l cccccc}
    \\ \hline \\
     &{} &  MPE & SMAC & LBF & RWARE \\
    \\ \hline \\
        &\begin{tabular}{@{}c@{}}Hidden  dimension\end{tabular} & 128 & 64 & 64 &64\\
        &\begin{tabular}{@{}c@{}}Reward standardisation\end{tabular}  & True & True & True &True \\
        &\begin{tabular}{@{}c@{}}Network  type\end{tabular}   & FC & GRU & GRU & FC\\
        &Target update &\begin{tabular}{@{}c@{}}0.01  (soft)\end{tabular} &\begin{tabular}{@{}c@{}}200 (hard)\end{tabular} & \begin{tabular}{@{}c@{}}200 (hard)\end{tabular} & \begin{tabular}{@{}c@{}}0.01 (soft)\end{tabular} \\
    \\ \hline \\
    \end{tabular}
    \label{tab:hyper_iql_ns}
\end{table}

\begin{table}[ht]
\vspace{-5mm}
    \centering
    \small
    \caption{Learning rate for MTQL without parameter sharing.}
    \begin{tabular}{l ccccc}
    \\ \hline \\
     Environment &  Learning rates \\
    \\ \hline \\
        MPE & \begin{tabular}{@{}c@{}}$\{6.25\times10^{-5}, 1.25\times10^{-4}, 2.5\times10^{-4}, 5\times10^{-4}, $ \\ $ 1\times10^{-3}, 2\times10^{-3}, 4\times10^{-3} \}$\end{tabular}
        \\
        LBF & \begin{tabular}{@{}c@{}}$\{3.7\times10^{-5}, 7.5\times10^{-5}, 1.5\times10^{-4}, 3\times10^{-4}, $ \\
        $ 6\times10^{-4}, 1.2\times10^{-3}, 2.4\times10^{-3}\}$\end{tabular}
        \\ 
        RWARE & \begin{tabular}{@{}c@{}}$\{6.25\times10^{-5}, 1.25\times10^{-4}, 2.5\times10^{-4}, 5\times10^{-4}, $ \\
        $ 1\times10^{-3}, 2\times10^{-3}, 4\times10^{-3}\}$\end{tabular}
        \\
        SMAC & \begin{tabular}{@{}c@{}}$\{\{1.25\times10^{-4}, 2.5\times10^{-4}, 5\times10^{-4}, $ $ 1\times10^{-3}, 2\times10^{-3}\}$\end{tabular}
        \\
    \hline \\
    \end{tabular}
    \label{tab:lr_iql_ns}
\end{table}

\newpage
\vspace{-5mm}
\section{More results} \label{app:learning_curves_heatmaps}

Regarding the experimental setup, 5 seeds may not be enough due to high variance in the performance of the baseline algorithms in certain tasks. We did the following to make sure that improvements due to using multi-timscale learning is indeed significant: For environments with high variance in the performance, we ran the experiments for 25 seeds and the results are reported in~\ref{app:25seeds_results}. Compared with the original results with 5 seeds, we can see that the improvements due to multi-timescale learning are statistically significant indeed.

\begin{table}[h!]
    \vspace{-5mm}
    \vskip -\baselineskip
    \centering
    \caption{Rerunning experiments with 25 seeds for the tasks with high variance. The improvements due to multi-timescale learning are still statistically significant.}
    \begin{tabular}{@{}ccccc@{}}
    \toprule
        &  Foraging-10*10-3p-3f & SimpleTag & RWARE tiny4ag \\
    \midrule
        IQL & $0.42 (\pm 0.01) $ & $57.36 (\pm1.57)$ & $0.1243 (\pm0.0158)$ \\
         MTQL & $\textbf{0.48} (\pm 0.02)$ & $\textbf{64.03} (\pm2.00)$ & $\textbf{0.1405} (\pm0.0172)$ \\
    \bottomrule
    \end{tabular}
    \vskip -\baselineskip
    \label{app:25seeds_results}
\end{table}

\subsection{Learning curves and Heatmaps}

In this section, we provided the best learning curves for MTPPO and MTQL in Figures~\ref{app:fig:best-learning-curves-sippo} and \ref{app:fig:best-learning-curves-siql}. We also included the learning curves across switching periods in Figures~\ref{app:all-learning-curves-ippo} and \ref{app:all-learning-curves-iql}.

\begin{figure}[ht]
\begin{center}
\vspace{-2mm}
\centerline{\includegraphics[width=0.85\columnwidth]{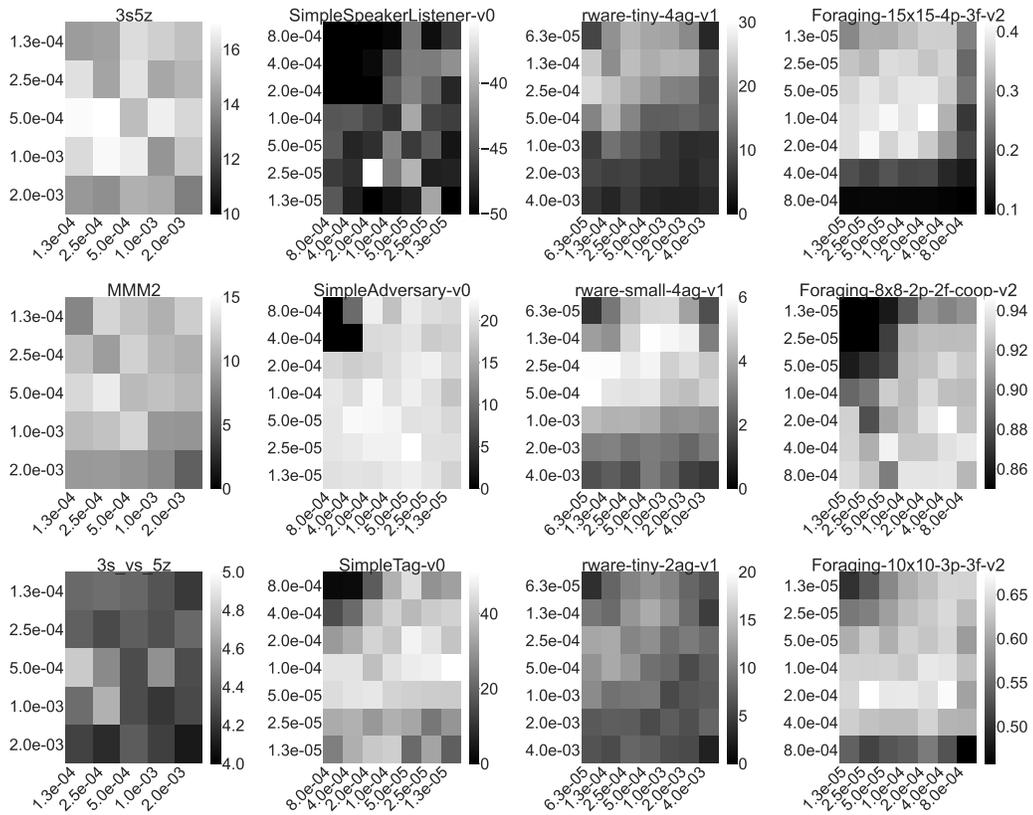}}
\caption{\em Final performance of IPPO with different learning rate combinations. These heatmaps are for the best switching period values. It's clear that in many tasks, non-diagonal values (MTPPO) have relatively better performance compared to diagonal values (IPPO).}
\label{app:best-heatmaps-sippo}
\end{center}
\end{figure}

\begin{figure}[ht]
\begin{center}
\centerline{\includegraphics[width=0.95\columnwidth]{figs/test_curves/ippo_staged_ns_curves.pdf}}
\caption{\em Learning curves for each task. MTPPO leads to faster convergence than IPPO in many tasks. Solid lines are mean test returns over 100 test episodes averaged over 5 independent seeds. Shaded regions indicates the standard-error. Smoothing with window size = 5 is used.}
\label{app:fig:best-learning-curves-sippo}
\end{center}
\end{figure}

\begin{figure}[ht]
\begin{center}
\centerline{\includegraphics[width=0.85\columnwidth]{figs/test_curves/iql_staged_ns_curves.pdf}}
\caption{\em Learning curves for each task. MTQL leads to faster convergence than IQL in many tasks. Solid lines are mean test returns over 100 test episodes averaged over 5 independent seeds. Shaded regions indicates the standard-error. Smoothing with window size = 5 is used.}
\label{app:fig:best-learning-curves-siql}
\end{center}
\end{figure}

We also provide heatmaps of the final performance for MTPPO for all combination of learning rates in Figure~\ref{app:best-heatmaps-sippo}.

\begin{figure*}[ht]
\centering
\begin{subfigure}[Foraging-8x8-2p-2f-coop-v2]{
    \centering
    \includegraphics[ width=0.4\linewidth]{figs/test_curves/test-return-Foraging-8x8-2p-2f-coop-v2-ippo_staged_ns-metric-iqm-switch-period10000.0-curves.pdf}
    }
\end{subfigure}%
\begin{subfigure}[Foraging-10x10-3p-3f-v2]{
    \centering
    \includegraphics[ width=0.4\linewidth]{figs/test_curves/test-return-Foraging-10x10-3p-3f-v2-ippo_staged_ns-metric-iqm-switch-period10000.0-curves.pdf}
    }
\end{subfigure}%
\begin{subfigure}[Foraging-15x15-4p-3f-v2]{
    \centering
    \includegraphics[ width=0.4\linewidth]{figs/test_curves/test-return-Foraging-15x15-4p-3f-v2-ippo_staged_ns-metric-iqm-switch-period10000.0-curves.pdf}
    }
\end{subfigure}%
\begin{subfigure}[rware-tiny-2ag-v1]{
    \centering
    \includegraphics[ width=0.4\linewidth]{figs/test_curves/test-return-rware-tiny-2ag-v1-ippo_staged_ns-metric-iqm-switch-period10000.0-curves.pdf}
    }
\end{subfigure}%
\begin{subfigure}[rware-tiny-4ag-v1]{
    \centering
    \includegraphics[ width=0.4\linewidth]{figs/test_curves/test-return-rware-tiny-4ag-v1-ippo_staged_ns-metric-iqm-switch-period10000.0-curves.pdf}
    }
\end{subfigure}%
\begin{subfigure}[rware-small-4ag-v1]{
    \centering
    \includegraphics[ width=0.4\linewidth]{figs/test_curves/test-return-rware-small-4ag-v1-ippo_staged_ns-metric-iqm-switch-period10000.0-curves.pdf}
    }
\end{subfigure}%
\begin{subfigure}[SimpleSpeakerListener-v0]{
    \centering
    \includegraphics[ width=0.4\linewidth]{figs/test_curves/test-return-SimpleSpeakerListener-v0-ippo_staged_ns-metric-iqm-switch-period10000.0-curves.pdf}
    }
\end{subfigure}%
\begin{subfigure}[SimpleTag-v0]{
    \centering
    \includegraphics[ width=0.4\linewidth]{figs/test_curves/test-return-SimpleTag-v0-ippo_staged_ns-metric-iqm-switch-period10000.0-curves.pdf}
    }
\end{subfigure}%
\begin{subfigure}[SimpleAdversary-v0]{
    \centering
    \includegraphics[ width=0.4\linewidth]{figs/test_curves/test-return-SimpleAdversary-v0-ippo_staged_ns-metric-iqm-switch-period10000.0-curves.pdf}
    }
\end{subfigure}%
\begin{subfigure}[MMM2]{
    \centering
    \includegraphics[ width=0.4\linewidth]{figs/test_curves/test-return-MMM2-ippo_staged_ns-metric-iqm-switch-period10000.0-curves.pdf}
    }
\end{subfigure}%
\begin{subfigure}[3s5z]{
    \centering
    \includegraphics[ width=0.4\linewidth]{figs/test_curves/test-return-3s5z-ippo_staged_ns-metric-iqm-switch-period10000.0-curves.pdf}
    }
\end{subfigure}%
\begin{subfigure}[3s vs 5z]{
    \centering
    \includegraphics[ width=0.4\linewidth]{figs/test_curves/test-return-3s_vs_5z-ippo_staged_ns-metric-iqm-switch-period10000.0-curves.pdf}
    }
\end{subfigure}%
\caption{\em Learning curves of MTPPO and IPPO for each task and different switching periods. Solid lines are mean test returns over 100 test episodes averaged over 5 independent seeds. Shadow region indicates the standard-error. Smoothing with window size = 5 is used. Differnece in IPPO's performance across different switching periods is due to the variance in the results.}
\label{app:all-learning-curves-ippo} 
\end{figure*}

\begin{figure*}[ht]
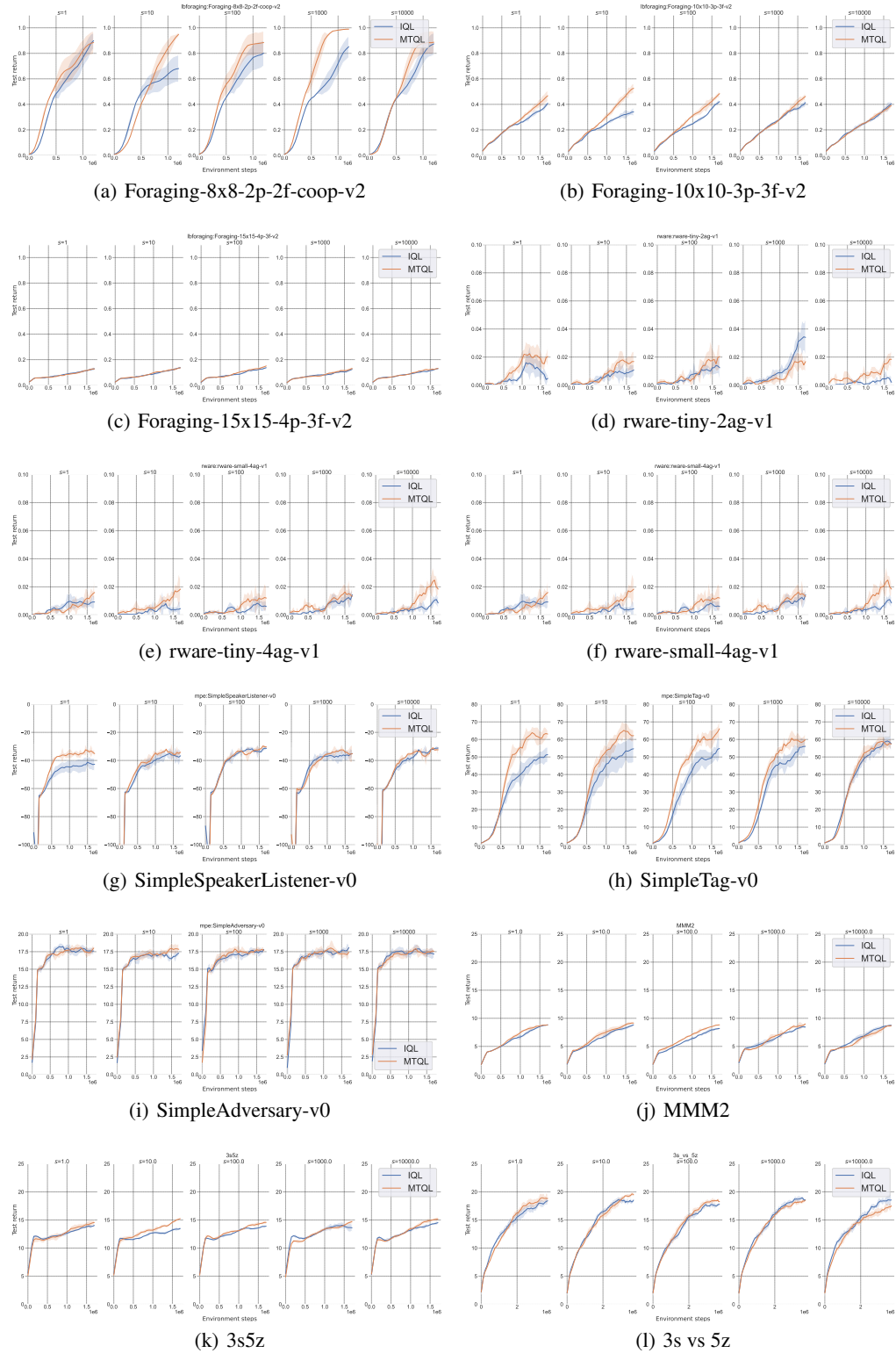

\centering
\begin{subfigure}[Foraging-8x8-2p-2f-coop-v2]{
    \centering
    \includegraphics[ width=0.4\linewidth]{figs/test_curves/test-return-Foraging-8x8-2p-2f-coop-v2-iql_staged_ns-metric-iqm-switch-period10000-curves.pdf}
    }
\end{subfigure}%
\begin{subfigure}[Foraging-10x10-3p-3f-v2]{
    \centering
    \includegraphics[ width=0.4\linewidth]{figs/test_curves/test-return-Foraging-10x10-3p-3f-v2-iql_staged_ns-metric-iqm-switch-period10000-curves.pdf}
    }
\end{subfigure}%
\begin{subfigure}[Foraging-15x15-4p-3f-v2]{
    \centering
    \includegraphics[ width=0.4\linewidth]{figs/test_curves/test-return-Foraging-15x15-4p-3f-v2-iql_staged_ns-metric-iqm-switch-period10000-curves.pdf}
    }
\end{subfigure}%
\begin{subfigure}[rware-tiny-2ag-v1]{
    \centering
    \includegraphics[ width=0.4\linewidth]{figs/test_curves/test-return-rware-tiny-2ag-v1-iql_staged_ns-metric-iqm-switch-period10000-curves.pdf}
    }
\end{subfigure}%
\begin{subfigure}[rware-tiny-4ag-v1]{
    \centering
    \includegraphics[ width=0.4\linewidth]{figs/test_curves/test-return-rware-small-4ag-v1-iql_staged_ns-metric-iqm-switch-period10000-curves.pdf}
    }
\end{subfigure}%
\begin{subfigure}[rware-small-4ag-v1]{
    \centering
    \includegraphics[ width=0.4\linewidth]{figs/test_curves/test-return-rware-small-4ag-v1-iql_staged_ns-metric-iqm-switch-period10000-curves.pdf}
    }
\end{subfigure}%
\begin{subfigure}[SimpleSpeakerListener-v0]{
    \centering
    \includegraphics[ width=0.4\linewidth]{figs/test_curves/test-return-SimpleSpeakerListener-v0-iql_staged_ns-metric-iqm-switch-period10000-curves.pdf}
    }
\end{subfigure}%
\begin{subfigure}[SimpleTag-v0]{
    \centering
    \includegraphics[ width=0.4\linewidth]{figs/test_curves/test-return-SimpleTag-v0-iql_staged_ns-metric-iqm-switch-period10000-curves.pdf}
    }
\end{subfigure}%
\begin{subfigure}[SimpleAdversary-v0]{
    \centering
    \includegraphics[ width=0.4\linewidth]{figs/test_curves/test-return-SimpleAdversary-v0-iql_staged_ns-metric-iqm-switch-period10000-curves.pdf}
    }
\end{subfigure}%
\begin{subfigure}[MMM2]{
    \centering
    \includegraphics[ width=0.4\linewidth]{figs/test_curves/test-return-MMM2-iql_staged_ns-metric-mean-switch-period10000.0-curves.pdf}
    }
\end{subfigure}%
\begin{subfigure}[3s5z]{
    \centering
    \includegraphics[ width=0.4\linewidth]{figs/test_curves/test-return-3s5z-iql_staged_ns-metric-mean-switch-period10000.0-curves.pdf}
    }
\end{subfigure}%
\begin{subfigure}[3s vs 5z]{
    \centering
    \includegraphics[ width=0.4\linewidth]{figs/test_curves/test-return-3s_vs_5z-iql_staged_ns-metric-mean-switch-period10000.0-curves.pdf}
    }
\end{subfigure}%
\caption{\em Learning curves of MTQL and IQL for each task and different switching periods. Solid lines are mean test returns over 100 test episodes averaged over 5 independent seeds. Shadow region indicates the standard-error. Smoothing with window size = 5 is used. Differnece in IQL's performance across different switching periods is due to the variance in the results.}
\label{app:all-learning-curves-iql} 
\end{figure*}

\clearpage

\subsection{Orion Experiments}\label{app:orion_exp}

For orion experiments, all the hyperparameters are similar to table \ref{tab:hyper_iql_ns}. However, learning rates and switching periods are sampled from distributions mentioned in table \ref{tab:orion_hyp}. 


\begin{figure}[ht]
\centering
\begin{subfigure}[Foraging-8x8-2p-2f-coop]{
    \centering
    \includegraphics[ width=0.3\linewidth]{figs/orion_results/SIQL_Orion_Foraging-8-2p-2f-coop-v2_regret.pdf}
    }
    \label{fig:orionlbf8}
\end{subfigure}%
\begin{subfigure}[Foraging-10x10-3p-3f]{
    \centering
    \includegraphics[ width=0.3\linewidth]{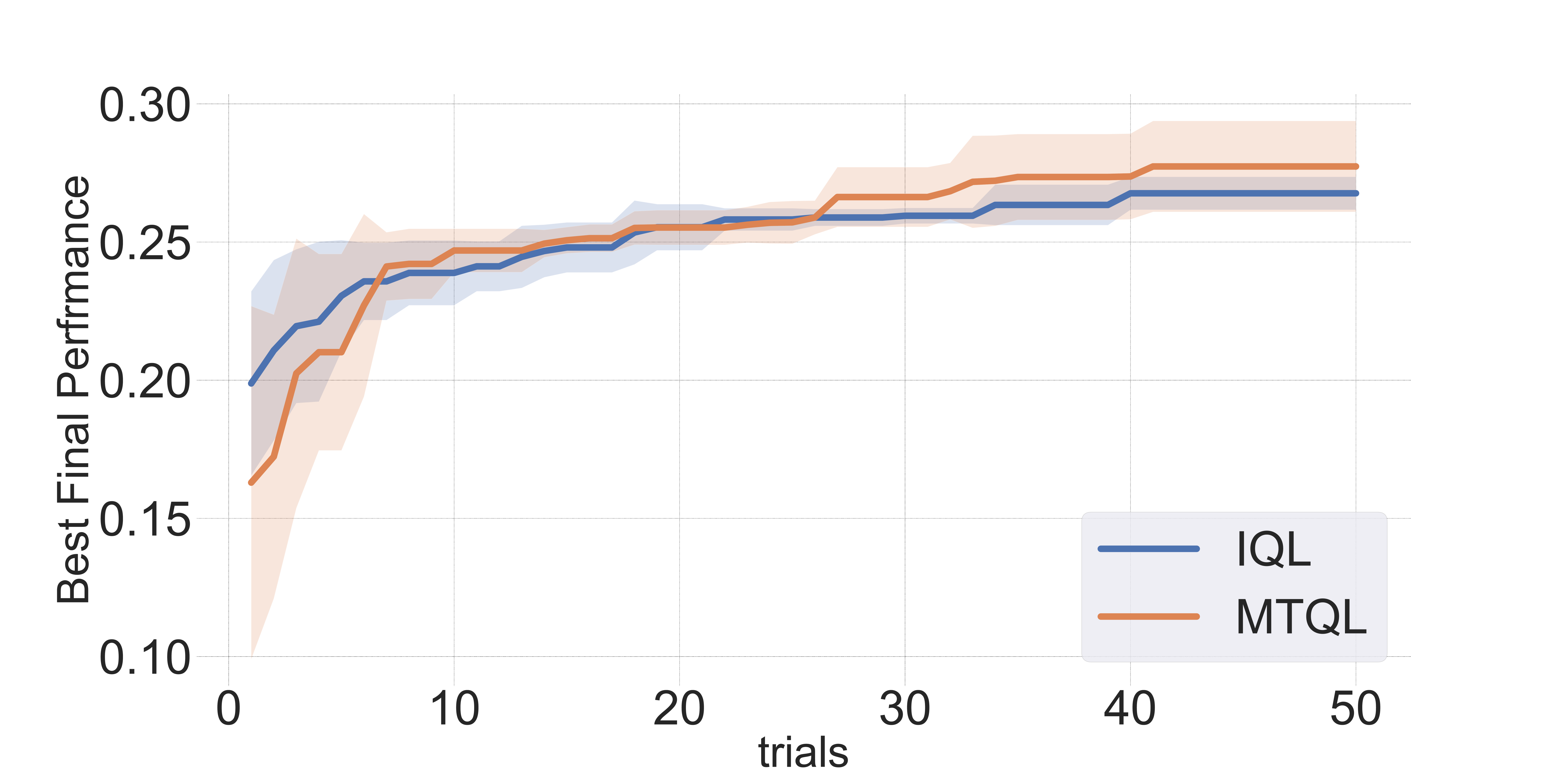}
    }
    \label{fig:orionlbf10}
\end{subfigure}%
\begin{subfigure}[Foraging-15x15-4p-3f]{
    \centering
    \includegraphics[ width=0.3\linewidth]{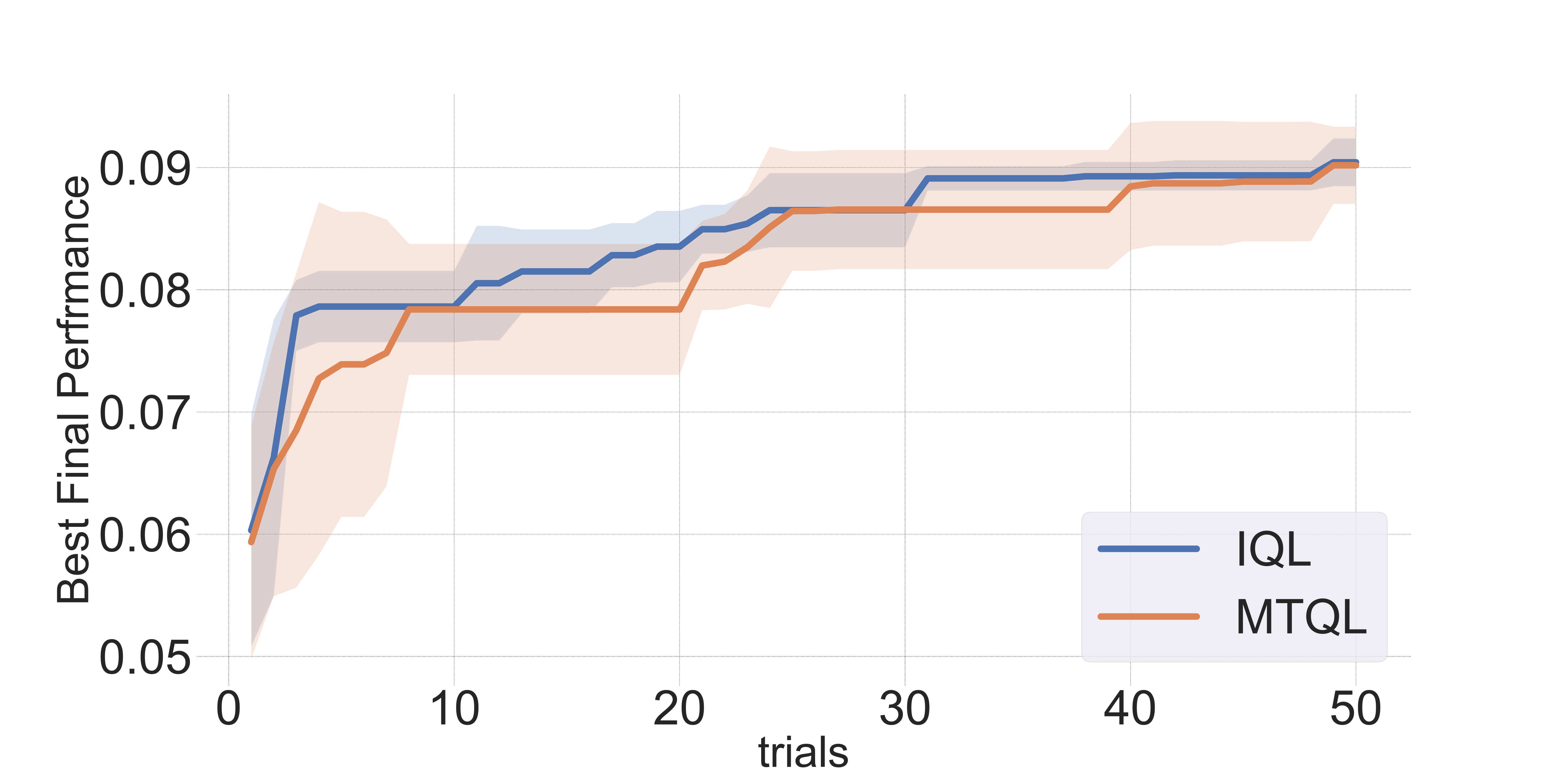}
    }
    \label{fig:orionlbf15}
\end{subfigure}%
\begin{subfigure}[mpe:SimpleSpeakerListener]{
    \centering
    \includegraphics[ width=0.3\linewidth]{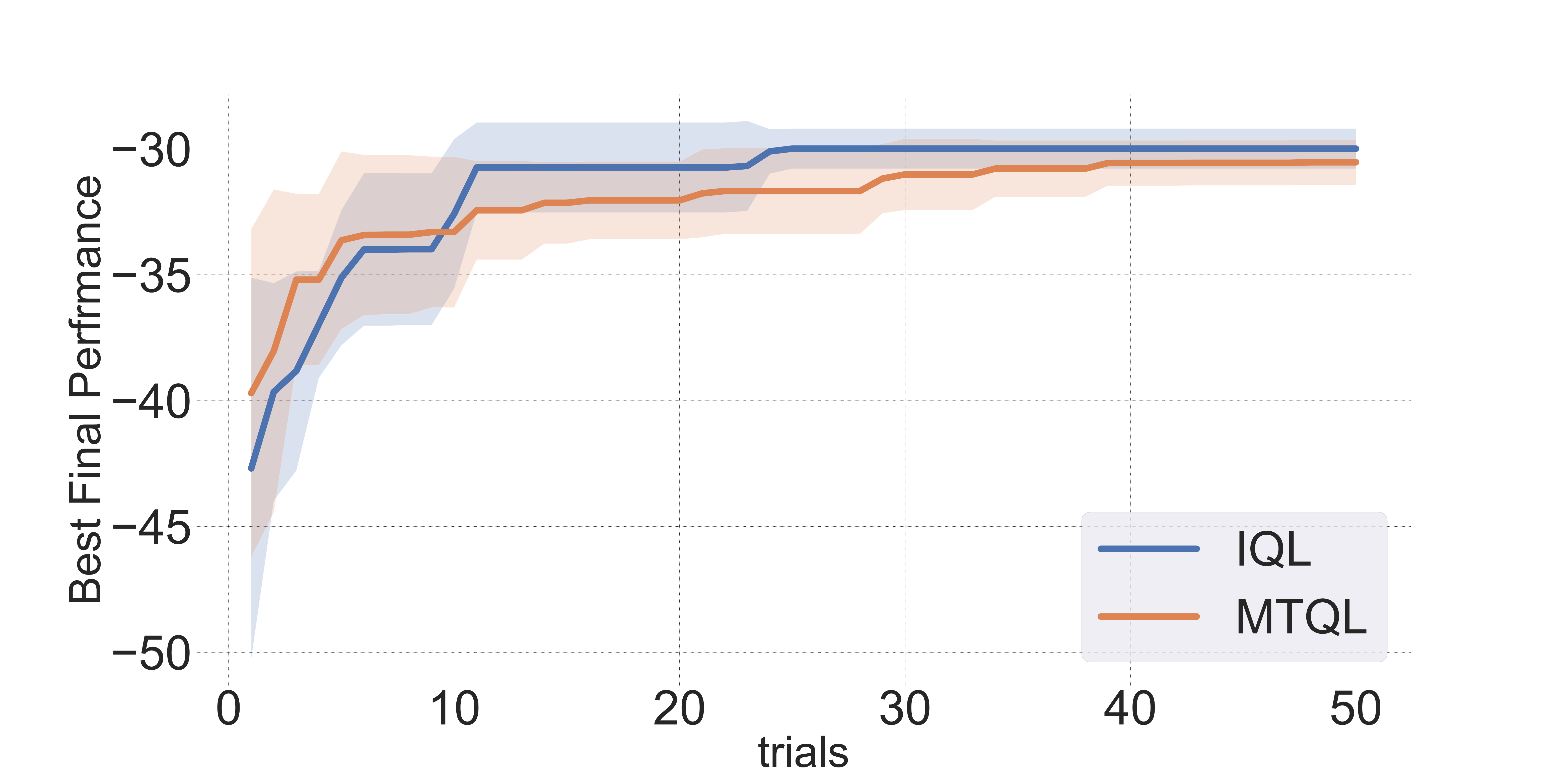}
    }
    \label{fig:orion_Speaker}
\end{subfigure}%
\begin{subfigure}[mpe:SimpleAdversary]{
    \centering
    \includegraphics[ width=0.3\linewidth]{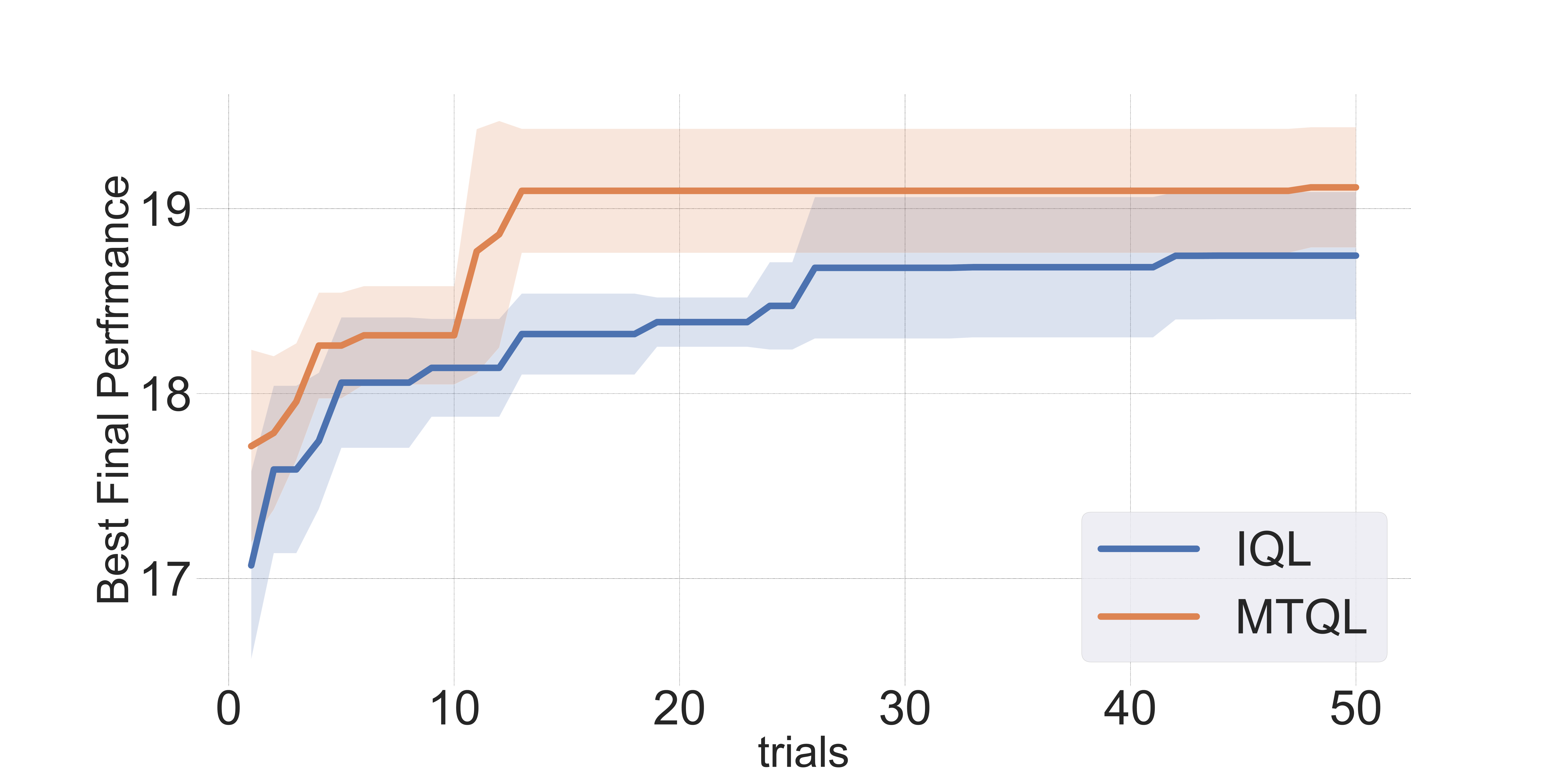}
    }
    \label{fig:orion_Adversary}
\end{subfigure}%
\begin{subfigure}[mpe:SimpleTag]{
    \centering
    \includegraphics[ width=0.3\linewidth]{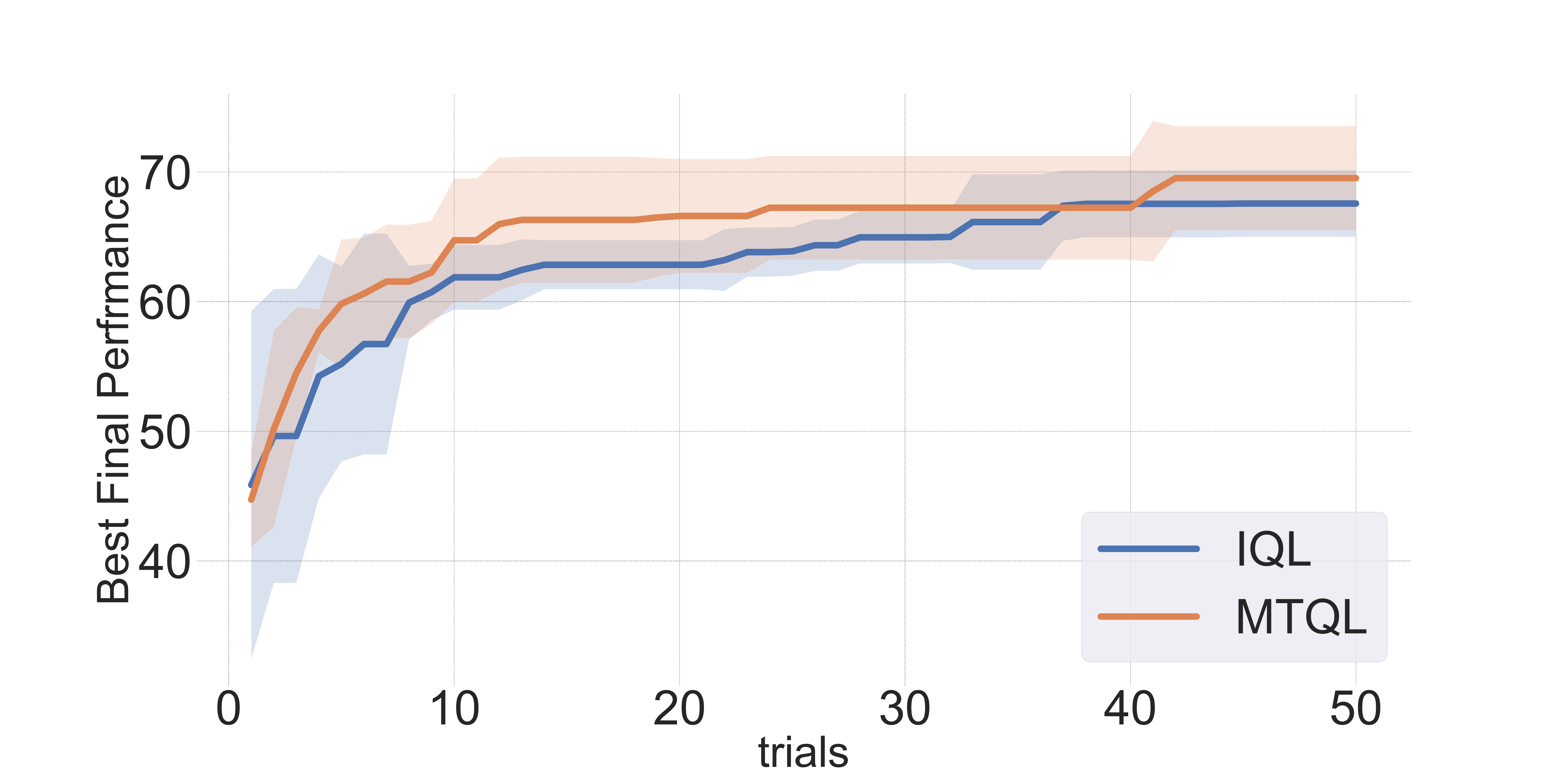}
    }
    \label{fig:orion_Tag}
\end{subfigure}%
\begin{subfigure}[smac:3s vs 5z]{
    \centering
    \includegraphics[ width=0.3\linewidth]{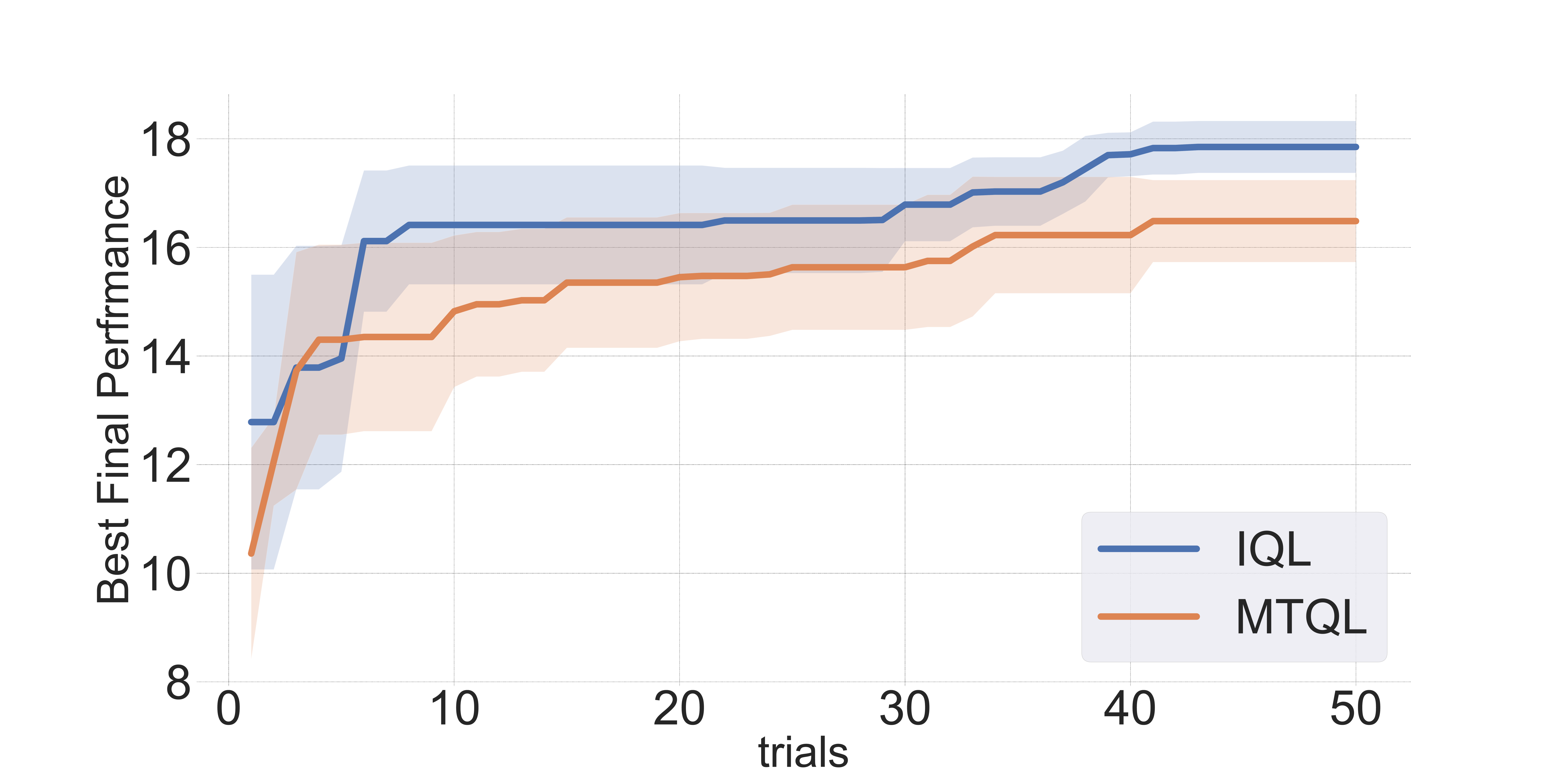}
    }
    \label{fig:orion_3s_vs_5z}
\end{subfigure}%
\begin{subfigure}[smac:3s5z]{
    \centering
    \includegraphics[ width=0.3\linewidth]{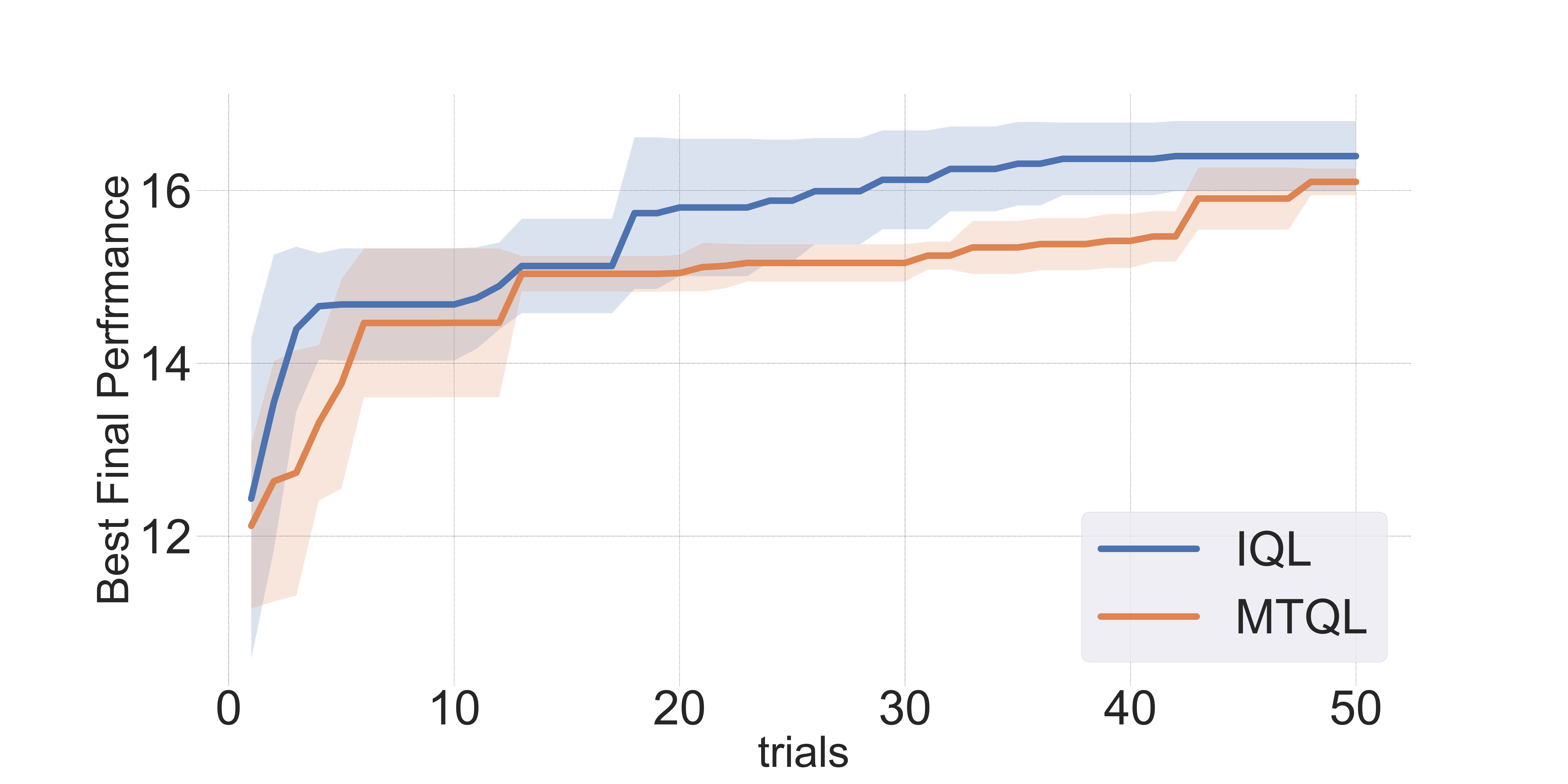}
    }
    \label{fig:orion_3s5z}
\end{subfigure}%
\begin{subfigure}[smac:MMM2]{
    \centering
    \includegraphics[ width=0.3\linewidth]{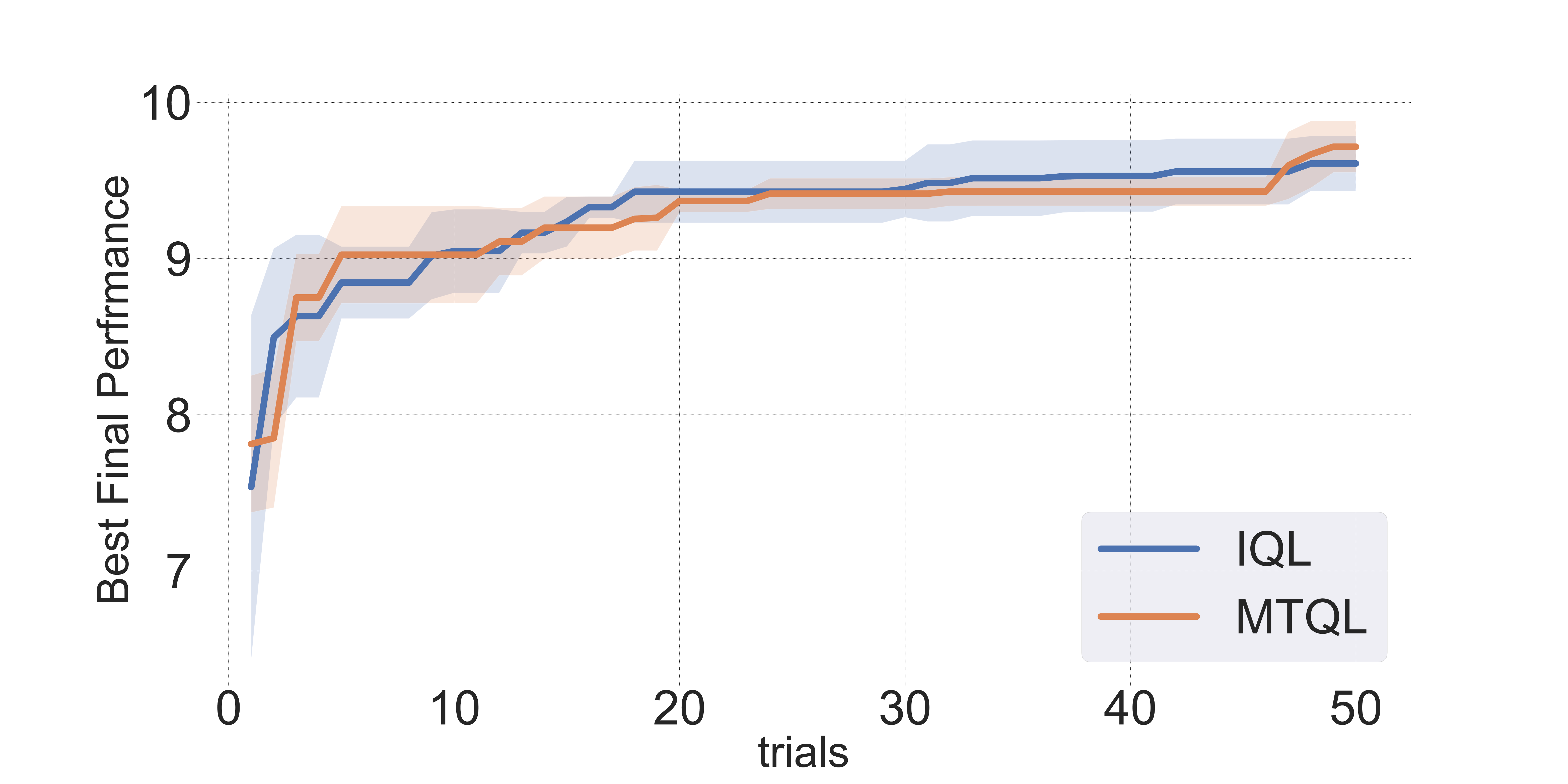}
    }
    \label{fig:orion_MMM2}
\end{subfigure}%
\begin{subfigure}[rware-tiny-4ag]{
    \centering
    \includegraphics[ width=0.3\linewidth]{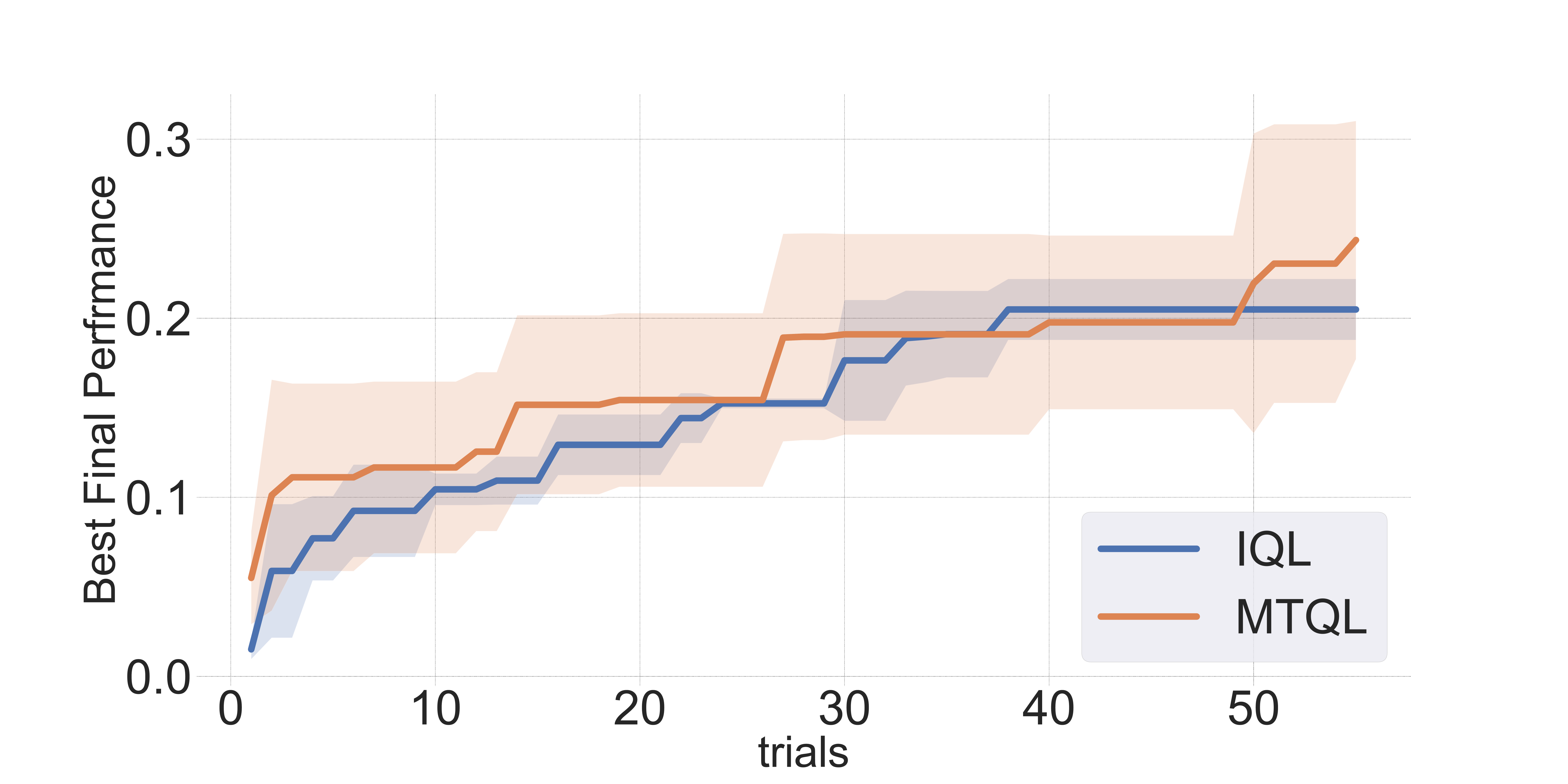}
    }
    \label{fig:orion_tiny4}
\end{subfigure}%
\caption{\em Orion optimization curves of MTQL and IQL for each task. Solid lines are the best final test return found by TPE algorithm averaged over 5 independent seeds. Shadow region indicates the standard-error.}
\label{fig:orion_all_iql} 
\end{figure}

\newpage

\begin{table}[ht]
    \centering
    \caption{ Orion hyper parameters.}
    \begin{tabular}{l ccc}
    \\ \hline \\
     Environment &  Learning rates & Switching period \\
    \\ \hline \\
        MPE & $loguniform(1e-05, 0.005)$ & $[10, 1000, 100000]$
        \\
        LBF & $~loguniform(5e-06, 0.005)$ & $[10, 1000, 100000]$
        \\
        RWARE & $loguniform(1e-05, 0.005)$ & $[10, 1000, 100000]$
        \\
    \hline \\
    \end{tabular}
    \label{tab:orion_hyp}
\end{table}

\begin{figure}[t]
\centering
\begin{subfigure}[MTPPO]{
    \centering
    \includegraphics[ width=0.45\linewidth]{figs/final_performance/ippo_staged_ns_perf.pdf}
    }
\end{subfigure}%
\begin{subfigure}[MTQL]{
    \centering
    \includegraphics[ width=0.45\linewidth]{figs/final_performance/iql_staged_ns_perf.pdf}
    }
\end{subfigure}%
\caption{\em Performance of IPPO, IQL, and their Multi-timescale version vs switching period for each task. At each switching period, the best-performing multi-timescale learning with different learning rates is compared with the best-performing independent learning with the same learning rates. Error bars represent the standard error over 5 seeds.} 
\label{app:perf-gain-errorbars} 
\vspace{-5mm}
\end{figure}


\end{document}